\newtheorem{proposition}{Proposition}
\begin{document}

\title{LinFormer: A Linear-based Lightweight Transformer Architecture For Time-Aware MIMO Channel Prediction}

\author{Yanliang Jin, Yifan Wu, Yuan Gao, \textit{Member, IEEE}, Shunqing Zhang, \textit{Senior Member, IEEE}, Shugong Xu, \textit{Fellow, IEEE}, Cheng-Xiang Wang, \textit{Fellow, IEEE}

\thanks{This paper is supported by the Innovation Program of Shanghai Municipal Science and Technology Commission under Grant 22511103202. (Yuan Gao is the corresponding author)} 
\thanks{Yanliang Jin, Yifan Wu, Yuan Gao, Shunqing Zhang and Shugong Xu are with the School of Communication and Information Engineering, Shanghai University, China, email: jinyanliang@staff.shu.edu.cn, 22721189@shu.edu.cn, gaoyuansie@shu.edu.cn, shunqing@shu.edu.cn and shugong@shu.edu.cn.}
\thanks{Cheng-Xiang Wang is with National Mobile Communications Research Laboratory, School of Information Science and Engineering, Southeast University, Nanjing, China, e-mail: chxwang@seu.edu.cn}
}

\maketitle
\begin{abstract}
The emergence of 6th generation (6G) mobile networks brings new challenges in supporting high-mobility communications, particularly in addressing the issue of channel aging. While existing channel prediction methods offer improved accuracy at the expense of increased computational complexity, limiting their practical application in mobile networks.
To address these challenges, we present LinFormer, an innovative channel prediction framework based on a scalable, all-linear, encoder-only Transformer model. Our approach, inspired by natural language processing (NLP) models such as BERT, adapts an encoder-only architecture specifically for channel prediction tasks. We propose replacing the computationally intensive attention mechanism commonly used in Transformers with a time-aware multi-layer perceptron (TMLP), significantly reducing computational demands. 
The inherent time awareness of TMLP module makes it particularly suitable for channel prediction tasks. We enhance LinFormer's training process by employing a weighted mean squared error loss (WMSELoss) function and data augmentation techniques, leveraging larger, readily available communication datasets. Our approach achieves a substantial reduction in computational complexity while maintaining high prediction accuracy, making it more suitable for deployment in cost-effective base stations (BS). Comprehensive experiments using both simulated and measured data demonstrate that LinFormer outperforms existing methods across various mobility scenarios, offering a promising solution for future wireless communication systems.
\end{abstract}

\begin{IEEEkeywords}
Channel prediction, mobility, Transformer, linear Model.
\end{IEEEkeywords}

%
\IEEEpeerreviewmaketitle

\section{Introduction}
%
%
%
%
Multiple-input multiple-output (MIMO) technology plays a pivotal role in enhancing spectrum efficiency for fifth-generation (5G) and beyond 5G (B5G) mobile networks\cite{MMIMORobert2023, wang2023road, you2021towards}. Accurate channel state information (CSI) is crucial for BS to optimize beamforming and precoding, thereby maximizing data transmission rates in MIMO systems.

Conventional CSI acquisition methods, which rely on transmitting known pilot signals, face significant challenges in 5G and B5G networks. The increasing number of antennas in modern systems necessitates more pilot signals, leading to substantial overhead. Moreover, the demand for higher mobility causes rapid CSI variations, exacerbating the channel aging problem \cite{truong2013effects}. The use of outdated CSI in precoding can severely degrade system performance\cite{yin2020addressing, EricssonMIMO}.

Channel prediction has emerged as a promising solution to these challenges, allowing for the estimation of future CSI based on historical data without additional overhead. While traditional mathematical approaches such as linear extrapolation\cite{c2}, sum-of-sinusoids models\cite{c3}, Kalman filtering\cite{kim2020massive}, and autoregressive (AR) models\cite{c5} have been employed, they often struggle with the complexities of real-world scenarios, such as multi-path propagation and Doppler effects.

Recent advancements in artificial intelligence have led to the application of machine learning techniques in channel prediction\cite{huang2022artificial}. Deep learning approaches, including physics-inspired neural networks\cite{xiao2022c}, have shown promise in predicting static channel impulse responses. However, their performance in dynamic scenarios remains limited. To address this, some researchers have proposed hybrid approaches that combine model-driven and data-driven techniques in the angular delay domain, aiming to improve prediction robustness in high-mobility massive MIMO environments\cite{wu2021channel}.

In time-domain channel prediction, sequential frameworks such as Recurrent Neural Networks (RNNs)\cite{elman1990finding, c6}, Gated Recurrent Units (GRUs)\cite{GRU, stenhammar2024comparison}, and Long Short-Term Memory (LSTM) networks\cite{hochreiter1997long,c9,Jiang2} have demonstrated superior performance, even with limited data. These models benefit from their Markov inductive bias, which allows them to capture temporal dependencies effectively. However, they are susceptible to cumulative errors, particularly in long-term predictions. While RNNs inherently incorporate positional information, they struggle with long-term dependencies due to the vanishing gradient problem\cite{hochreiter1997long, GRU}. This limitation can lead to errors in multi-step predictions. GRUs have shown promise in tapped delay line (TDL) channel prediction, but their performance is often constrained by the size of available training datasets\cite{stenhammar2024comparison}.

Transformer models\cite{Transformer}, which have revolutionized NLP, have recently been applied to temporal channel prediction with promising results\cite{Accurate}. The attention mechanism employed by Transformers enables effective modeling of long-range dependencies and parallel sequence processing, potentially mitigating the cumulative error issues observed in RNN-based approaches.


Despite the advantages of the Transformer model, its complexity can lead to longer inference times for channel prediction, particularly when dealing with long input sequences and when base station devices have limited computing resources. 
Insufficient inference speed may render predicted channels applicable only for precoding the final few time intervals—or possibly none at all—as earlier intervals might elapse before the inference process concludes. 

Moreover, the extensive parameter volume in Transformer models elevates the risk of overfitting, potentially compromising their generalization capabilities. The attention mechanism, a key component of Transformers, is also vulnerable to the inherent noise and redundancy typically found in time series data \cite{eldele2024tslanet}.

The inherent permutation-invariance of the attention mechanism in Transformers poses an additional challenge. Even with position encoding, there remains a risk of losing vital temporal information. Recent research \cite{DLinear, RLinear} employing straightforward linear models has cast doubt on the efficacy of the original Transformer architecture for time series prediction. These studies highlight how the permutation-invariance of self-attention in Transformers may compromise the preservation of temporal information, which is crucial for tasks such as channel prediction.

Intriguingly, these studies demonstrated that a single linear layer can occasionally surpass sophisticated Transformer architectures in time series forecasting tasks. However, it's important to note that while these linear models may perform well with small, clean datasets, they often encounter difficulties when dealing with more complex and noisy temporal channel data.

Furthermore, the majority of existing models have been predominantly trained using simulated CSI data. This reliance on simulated data raises significant concerns regarding the practical applicability of these models in real-world mobile network environments.

\subsection{Our Contributions}
This paper addresses the aforementioned challenges in channel prediction by proposing a novel deep learning-based framework that achieves low prediction error, high inference speed, and improved generality in practical mobile networks. Our approach, named \textbf{LinFormer}, combines the strengths of \textbf{lin}ear layers and the Trans\textbf{former} encoder-only architecture. By replacing the attention layer with a linear-based layer while retaining the Transformer encoder's multi-block design, LinFormer offers a scalable solution for handling complex temporal channel series with enhanced inference speed and prediction accuracy. The key contributions of this work are as follows:

\begin{itemize}
\item We propose LinFormer, a scalable model structured as an encoder with trainable parameters exclusively derived from linear layers. This innovative design preserves the Transformer's architectural benefits while significantly reducing parameter count and overall complexity. We develop the TMLP module that directly models long-range dependencies within channel sequences by learning time-step-dependent weights. This approach is particularly effective for time-varying channels affected by Doppler frequency shifts and multipath propagation. 
\item We explore the potential of the proposed LinFormer comprehensively. Specifically, we investigate the impact of model parameter quantity and training data size on channel prediction error. Our analysis includes scaling the model by increasing the number of layers to expand parameter capacity, as well as examining the effects of varying training sample sizes and channel sequence lengths. Additionally, we enhance prediction accuracy through the implementation of WMSELoss and novel data augmentation techniques.
\item We perform extensive simulations to demonstrate LinFormer's superiority in channel prediction accuracy and inference speed across various scenarios, including different speeds and Signal-to-Noise Ratios (SNRs), using simulated CSI aligned with 3GPP standards. Our results show that LinFormer reduces channel prediction error by over 60\% compared to GRU at comparable inference speeds, while achieving marginally better prediction performance at six times the speed. Furthermore, we validate the generalization ability of the LinFormer in practical mobile networks using measured CSI data.
\end{itemize}

\subsection{Organizations and Notations}

The structure of this paper is as follows. Section II presents the system model for massive MIMO. Section III defines the channel prediction problem. Section IV introduces our proposed LinFormer model, including the rationale for using WMSELoss over traditional MSE, and a comprehensive description of the TMLP. Section V details our experimental results using simulated and measurement CSI data with in-depth discussion. Section VI offers concluding remarks and future directions.

Throughout this paper, we employ the following notation. Boldface lower-case letters represent column vectors, while boldface upper-case letters denote matrices. $(\cdot)^T$, $(\cdot)^H$, and $(\cdot)^{-1}$ signify the transpose, conjugate transpose, and inverse of a matrix, respectively. $\mathbb{C}$ and $\mathbb{R}$ represent the sets of complex and real numbers, respectively. $\mathbb{E}\{\cdot\}$ indicates the expectation operator. $||\cdot||_2$ denotes the Frobenius norm of a vector or matrix.

\section{System Model}

This section presents the system model for our wireless communication study.

Without loss of generality, we consider a multi-antenna wireless system with $T$ transmit and $R$ receive antennas operating in time division duplex (TDD) mode. In this setup, time is segmented into frames, which are further divided into slots. The BS acquires CSI via uplink pilot transmissions from the UE. The channel matrix at the BS, denoted as $\boldsymbol{H} \in \mathbb{C}^{R \times T}$, is a complex matrix with statistically independent coefficients:
\begin{equation}
  \boldsymbol{H} = \left(
    \begin{array}{ccc}
      h_{11} & \cdots & h_{1T}\\
      \vdots & \ddots & \vdots\\
      h_{R1} & \cdots & h_{RT}\\
    \end{array}
  \right),
\end{equation}
where $h_{ij}$ represents the channel coefficient between the $i$-th receive antenna and the $j$-th transmit antenna.

The BS employs beamforming to transmit signals to the UE. A symbol $c$ is weighted with a transmit vector $\boldsymbol{v}$ to form the transmitted signal vector $\boldsymbol{s}$. The received signal vector $\boldsymbol{y}$ is expressed as:

\begin{equation}
  \boldsymbol{y} = \boldsymbol{H}\boldsymbol{s} + \boldsymbol{n},
  \label{eq:received_signal}
\end{equation}
where the transmitted signals $\boldsymbol{s} = c\boldsymbol{v} = c\left(\begin{array}{ccc}v_1 & \cdots & v_T\end{array}\right)^T$. The additive white Gaussian noise (AWGN) vector $ \boldsymbol{n} = \left(\begin{array}{ccc}n_1 & \cdots & n_R\end{array}\right)^T$. The received signal $\boldsymbol{y}$ is combined by the receiving weighting vector $\boldsymbol{w} \in \mathbb{C}^{1 \times R}$ to obtain the estimated symbol $\hat{c}$:
\begin{equation}
  \hat{c} = \boldsymbol{w}\boldsymbol{y} = c\boldsymbol{w}\boldsymbol{H}\boldsymbol{v} + \boldsymbol{w}\boldsymbol{n}.
  \label{eq:estimated_signal}
\end{equation}

We primarily targets communication scenarios in 5G and beyond, where rapid user mobility leads to fast time-varying channels. Modern statistical wireless channel modeling stems from the Saleh-Valenzuela clustered channel model \cite{SVmodel}, which is particularly valuable for understanding how different factors, such as multipath propagation and Doppler shifts, can influence signal strength and transmission performance over time. It effectively simulates the random and unpredictable variations in channel quality that occur in real-world wireless communication environments.
The channel matrix for the $n$-th frame in TDD mode is calculated as
\begin{equation}
  \boldsymbol{H}^{(n)} = \sum_{l=1}^{L}{\alpha_l(n)e^{-j2{\pi}f_lnT_s}\boldsymbol{A}(\theta_l)\boldsymbol{A}^H(\phi_l)},
\end{equation}
where $\alpha_l(n)$ denotes the complex gain of the $l$-th propagation path, $T_s$ denotes the period of the frame, $f_l$ denotes Doppler shift, $\boldsymbol{A}(\theta_l)$ denotes the receive steering vector, $\boldsymbol{A}(\phi_l)$ is the transmit steering vector. $\theta_l$ and $\phi_l$ are the angles of departure (AoA) and angles of arrival (AoD), respectively. It is worth noting that the AoA and AoD are time-invariant, because the time scale of the frames in the 3GPP standard is in the order of tens of milliseconds \cite{3gpp.38.331, lim2021deep, liu2020robust}. Consequently, the time-varying nature of the channel is primarily attributed to the Doppler effect \cite{wang20206g, wang2022pervasive}. 

Time-domain channel prediction exploits the temporal correlation of time-varying channels. This correlation is quantified by the autocorrelation \cite{stuber2001principles} of $\boldsymbol{H}^{(t)}$, expressed as
\begin{equation}
    \boldsymbol{R}_{\boldsymbol{H}}(\tau) = \mathbb{E}[\boldsymbol{H}^{(n)}(\boldsymbol{H}^{(n+\tau)})^H] = \sigma^2J_0(2\pi f_d \tau T_s)\boldsymbol{A},
  \label{eq:autocorr}
\end{equation}
where 
\begin{equation}
\sigma^2 = \sum_{l}\mathbb{E}[\alpha_l^2],
\end{equation}
\begin{equation}
\boldsymbol{A}=\sum_{l}\boldsymbol{A}(\theta_l)\boldsymbol{A}^H(\phi_l)\boldsymbol{A}(\phi_l)\boldsymbol{A}^H(\theta_l),
\end{equation}
$f_d$ is the maximum Doppler frequency, and $J_0(\cdot)$ denotes the zero-order Bessel function of the first kind. Conventionally, it is assumed that $\sum_{l}\mathbb{E}[\alpha_l^2] = 1$ to ensure that the received signal power is equal to the transmitted signal power which results in $\boldsymbol{R}_{\boldsymbol{H}}(\tau)=J_0(2\pi f_d \tau T_s)\boldsymbol{A}$.


\section{Problem Formulation of Channel Prediction}

In this section, we formulate the channel prediction as a sequence-to-sequence problem.

The channels at the BS is time-varying. The channel over a period of time can be represented as a sequence of matrices as $\boldsymbol{H}_\mathrm{past} = \{\boldsymbol{H}^{(n-N_\mathrm{P}+1)}, \dots, \boldsymbol{H}^{(n)}\} \in \mathbb{C}^{N_\mathrm{P} \times R \times T}$,  where $N_\mathrm{P}$ is the number of past channels.  The BS needs to predict the future channels $\hat{\boldsymbol{H}}_\mathrm{future} = \{\boldsymbol{H}^{(n+1)}, \dots, \boldsymbol{H}^{(n+N_\mathrm{L})}\} \in \mathbb{C}^{N_\mathrm{L} \times R \times T}$, where $N_\mathrm{L}$ is the number of future channels. The goal of the channel prediction is to predict the future channel $\hat{\boldsymbol{H}}_\mathrm{future}$ based on the past estimated channel $\hat{\boldsymbol{H}}_\mathrm{past}$. 

According to Eq. (\ref{eq:received_signal}), the channel matrix $\hat{\boldsymbol{H}}_\mathrm{past}$ can be recovered from the received signal $\boldsymbol{y}$ and the transmitted signal $\boldsymbol{s}$. The BS can obtain the estimated past channel $\hat{\boldsymbol{H}}_\mathrm{past}$ by receiving the pilot signal from the UE by Minimum Mean-Square Error (MMSE) estimation \cite{mumtaz2016mmwave}.

First, we vectorize the channel matrix $\boldsymbol{H}$ to transmit pilot signal $\boldsymbol{p} \in \mathbb{C}^{RT \times RT}$ which is a diagonal matrix. 
 
The least square (LS) estimation $\hat{\boldsymbol{H}}_{LS}$ of the channel matrix is
\begin{equation}
  vec(\hat{\boldsymbol{H}}_\mathrm{LS}) = \boldsymbol{p}^{-1}\boldsymbol{y}.
\end{equation}
The MMSE estimation $\hat{\boldsymbol{H}}_\mathrm{MMSE}$ of the channel matrix is
\begin{equation}
  vec(\hat{\boldsymbol{H}}_\mathrm{MMSE}) = \boldsymbol{R}_{\boldsymbol{H}\boldsymbol{H}}(\boldsymbol{R}_{\boldsymbol{H}\boldsymbol{H}} + \frac{1}{\gamma_0}\boldsymbol{I}_{RT})^{-1}vec(\hat{\boldsymbol{H}}_\mathrm{LS}),
\end{equation}
where $\boldsymbol{R}_{\boldsymbol{H}\boldsymbol{H}} = \mathbb{E}\{\boldsymbol{H}\boldsymbol{H}^H\}$ is the autocorrelation matrix of the channel matrix $\boldsymbol{H}$, which can be obtained by statistics. $\gamma_0 = \frac{\sigma_p^2}{\sigma_n^2}$ is the SNR, $\sigma_p^2$ is the power of the pilot signal, $\sigma_n^2$ is the power of the noise.

Thus, the channel prediction problem is formulated as
\begin{equation}
  \hat{\boldsymbol{H}}_\mathrm{future} = f_\theta(\hat{\boldsymbol{H}}_\mathrm{past}),
\end{equation}
where $f_\theta(\cdot)$ is the channel prediction neural network, which is learned from the training data, $\theta$ is the learnable parameters of the neural network. 

The objective is to find $\theta$ such that the MSE between the predicted CSI and the ideal CSI is minimized.
\begin{equation}
    \theta = \arg \min_{\theta}(|f_\theta(\hat{\boldsymbol{H}}_\mathrm{past})-\boldsymbol{H}_\mathrm{future}|^2)
\end{equation}

The maximum ratio transmission (MRT) beamforming is adopted in the BS to maximize the SNR of the received signal. 

According to Eq. (\ref{eq:estimated_signal}), if $\boldsymbol{w}=\left(\begin{array}{ccc}\frac{1}{\sqrt{R}} & \cdots & \frac{1}{\sqrt{R}}\end{array}\right)$, the transmit weighting vector $\boldsymbol{v}$ can be obtained given the predicted future channel $\hat{\boldsymbol{H}}_\mathrm{future}$.
\begin{equation}
  \boldsymbol{v} = \frac{(\boldsymbol{w}\hat{\boldsymbol{H}}_\mathrm{future})^H}{||(\boldsymbol{w}\hat{\boldsymbol{H}}_\mathrm{future})^H||_2}.
\end{equation}

Therefore, the estimated signal vector $\hat{c}$ can be rewritten as
\begin{equation}
  \hat{c} = ca + \boldsymbol{w}\boldsymbol{n},
\end{equation}
where $a = \boldsymbol{w}\boldsymbol{H}_\mathrm{future}\boldsymbol{v} \in \mathbb{C}$.

So the channel capacity for the user in the downlink can be expressed as
\begin{equation}
  R = \log_2(1 + \gamma),
\end{equation}
where the overall SNR is written as
\begin{equation}
  \gamma = \lvert a\rvert^2\gamma_0.
\end{equation}

\begin{figure*}[htb]
  \centering
  \includegraphics[width=0.90\textwidth]{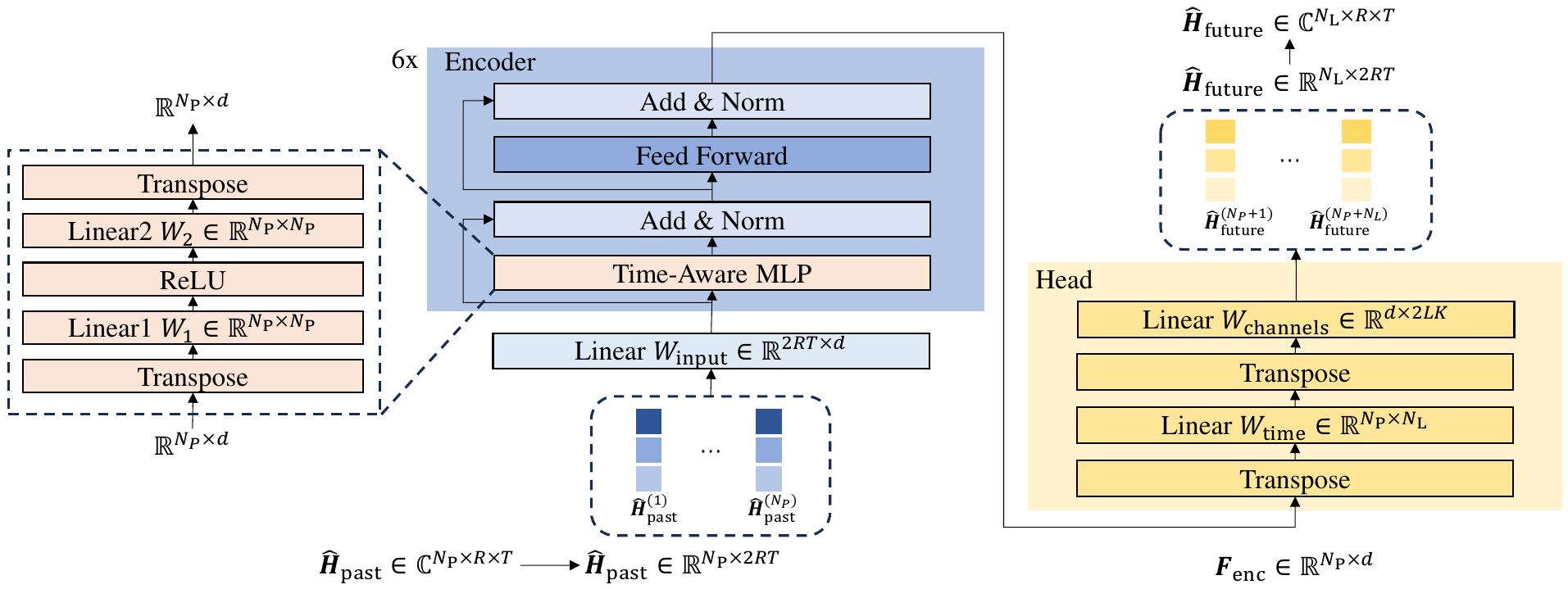}
  \caption{The Architecture of proposed LinFormer.}
  \label{fig:architecture}
\end{figure*}

\section{All-Linear Based Channel Prediction Framework}

LinFormer adapts the encoder-only model architecture,  which is inspired by the BERT model \cite{bert}. Here, the encoder maps an input embedding sequence of estimated past channels $\hat{\boldsymbol{H}}_{past} = (\hat{\boldsymbol{H}}_{past}^{(1)}, \hat{\boldsymbol{H}}_{past}^{(2)}, \cdots, \hat{\boldsymbol{H}}_{past}^{(N_\mathrm{P})})$ to a sequence of continuous representations $\boldsymbol{F} = (\boldsymbol{F}^{(1)}, \boldsymbol{F}^{(2)}, \cdots, \boldsymbol{F}^{(N_\mathrm{L})})$. Given the continuous representations $\boldsymbol{F}$, the prediction head then generates the future channel $\hat{\boldsymbol{H}}_{future} = (\hat{\boldsymbol{H}}_{future}^{(1)}, \hat{\boldsymbol{H}}_{future}^{(2)}, \cdots, \hat{\boldsymbol{H}}_{future}^{(N_\mathrm{L})})$ in parallel. 

\subsection{Overall architecture}

Our model integrates two novel components, i.e., the TMLP module and the dimension-wise separable linear head (DSLH), as depicted in Fig. \ref{fig:architecture}.
The TMLP is a lightweight module that replaces the self-attention mechanism in the original Transformer model, in which the weights are fixed time-step-dependent. The encoder consists of a stack of $N = 6$ identical layers, an architecture inherited from the original Transformer that makes LinFormer scalable. 
The DSLH is a simple linear layer that decomposes the weights of a fully connected matrix to reduce the number of parameters.


\begin{figure*}[htb]
  \centering
  \subfloat[Self-attention module in Transformer]{
      \label{fig:Attention}
      \includegraphics[width=0.3\textwidth]{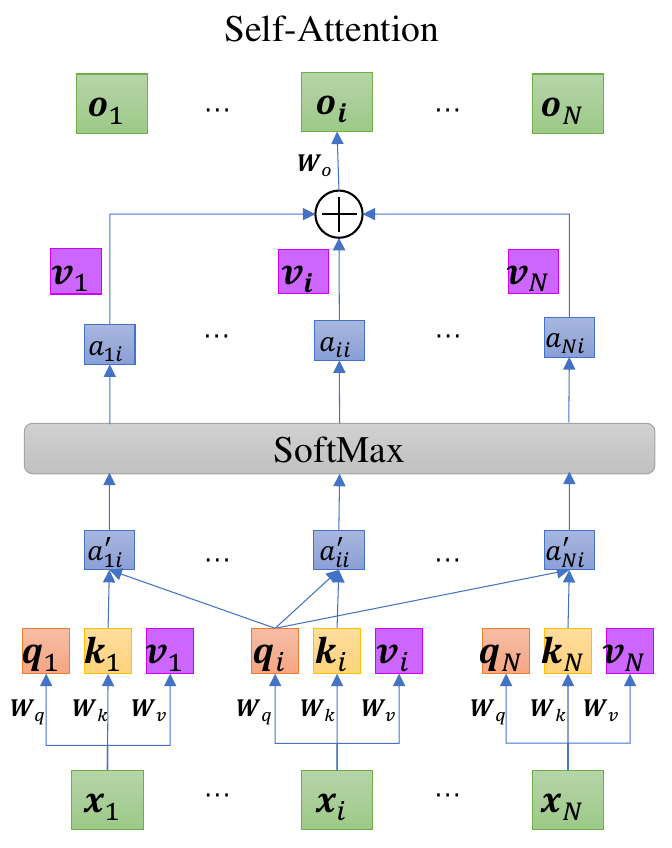}
    }
  \subfloat[TMLP in the proposed LinFormer]{
      \label{fig:TMLP}
      \includegraphics[width=0.3\textwidth]{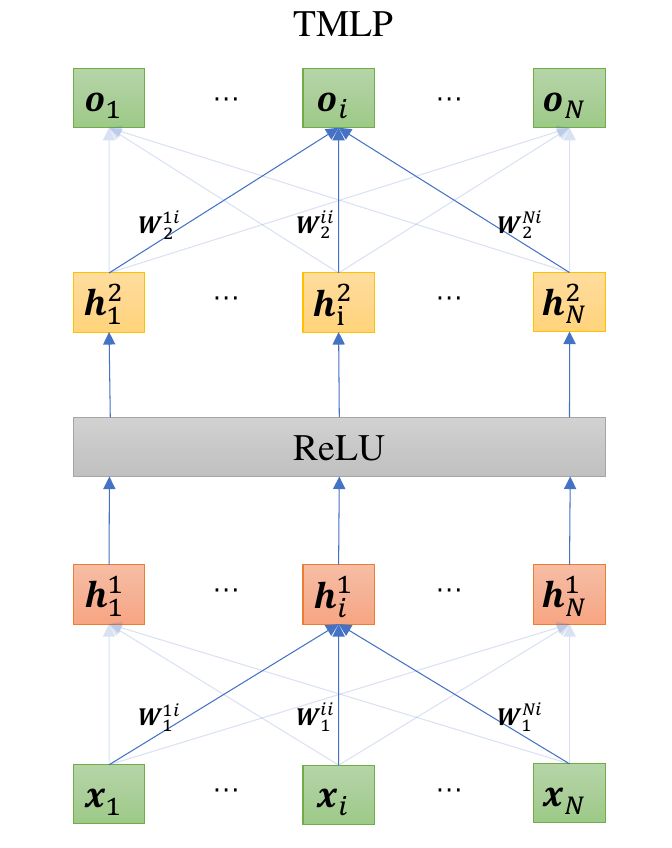}
    }
  \caption{Comparison of self-attention and proposed TMLP, taking the calculation of $\boldsymbol{O}_i$ as an example.}
  \label{fig:Attention_vs_TMLP}
\end{figure*}

\begin{algorithm}[htb]
  \SetAlgoLined
  \KwIn{$\boldsymbol{X}=[\boldsymbol{x}_1, \dots, \boldsymbol{x}_N]^T \in \mathbb{R}^{N \times d}$}
  \KwOut{$\boldsymbol{O}=[\boldsymbol{o}_1, \dots, \boldsymbol{o}_N]^T \in \mathbb{R}^{N \times d_v}$}
  Query matrix: $\boldsymbol{Q}=\boldsymbol{X}\boldsymbol{W}_q$\;
  Key matrix: $\boldsymbol{K}=\boldsymbol{X}\boldsymbol{W}_k$\;
  Value matrix: $\boldsymbol{V}=\boldsymbol{X}\boldsymbol{W}_v$\;
  $\boldsymbol{O}=\text{SoftMax}(\frac{\boldsymbol{Q}\boldsymbol{K}^T}{\sqrt{d_k}})\boldsymbol{V}$\;
  \caption{Self-Attention}
  \label{Algorithm:Standard}
\end{algorithm}

\subsection{Principles and Flaws of Attention}
Fig. \ref{fig:Attention} illustrates the self-attention layer, a key component of Transformer-based models. This layer transforms an input embedding matrix $\boldsymbol{X}$ into an output matrix $\boldsymbol{O}$. The output is essentially a weighted sum of value matrix $\boldsymbol{V}$, where the weights are determined by the scaled dot product of query and key vectors. 
The self-attention mechanism can be expressed in a general form as:

\begin{equation}
\boldsymbol{O} = \boldsymbol{A}\boldsymbol{X}\boldsymbol{B},
\label{eq:att}
\end{equation} where $\boldsymbol{X}$ is the input matrix and $\boldsymbol{B}= \boldsymbol{W}_v$. The attention matrix $\boldsymbol{A}$ is 
\begin{equation}
\boldsymbol{A} = f(\boldsymbol{X}) = \text{SoftMax}(\frac{\boldsymbol{X}\boldsymbol{W}_q(\boldsymbol{X}\boldsymbol{W}_k)^T}{\sqrt{d_k}}),
\end{equation} where $\boldsymbol{W}_q \in \mathbb{R}^{d \times d_k}$, $\boldsymbol{W}_k \in \mathbb{R}^{d \times d_k}$, and $\boldsymbol{W}_v \in \mathbb{R}^{d \times d_v}$ are learnable weight matrices. The dimensions $d$, $d_k$, and $d_v$ represent the dimensions of the input vector, key, and value, respectively. 
\begin{figure*}[htb]
  \centering
  \subfloat[The weights of the first linear layer in the first encoder layer.]{
      \label{fig:left}
      \includegraphics[width=0.4\textwidth]{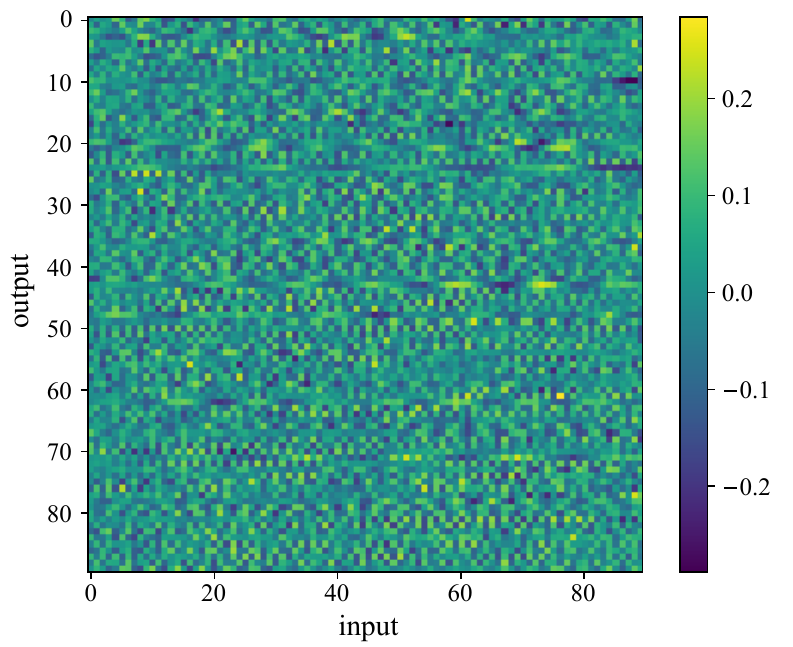}
    }
  \subfloat[The weights of the first linear layer in the last encoder layer.]{
      \label{fig:right}
      \includegraphics[width=0.4\textwidth]{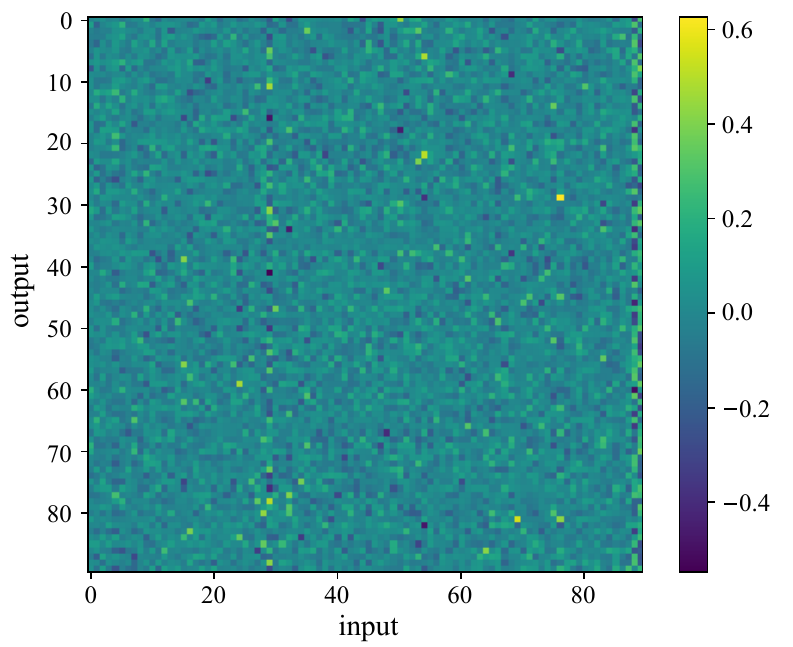}
    }
  \caption{The weights of TMLP.}
  \label{fig:Linear_weight}
\end{figure*}
The weight $\boldsymbol{A}$ used to weight the input $\boldsymbol{X}$ is mapped from $\boldsymbol{X}$ through the function $f(\cdot)$. So it is data-dependent.

While this mechanism is powerful in the field of NLP, it is important to note that its permutation-invariant nature can potentially compromise the preservation of temporal information \cite{Transformer}. In NLP tasks, each word in a sentence has rich semantic information.
Even when the word order is disrupted or the letters within a word are rearranged, humans or large language models (LLMs) can still comprehend the content \cite{cao2023unnatural}. 
This is because, the information conveyed by the text is of primary importance and relatively sparse, making the attention mechanism all that a Transformer needs. 

However, in the task of CSI prediction, the channel matrix at each time step does not carry strong semantics, and the information is evenly and densely distributed in a channel sequence. And temporal information becomes more critical. Simply using a Transformer may not yield satisfactory results. 
We derive the channel autocorrelation function as shown in Eq. (\ref{eq:autocorr}). Assuming the neural network can statistically obtain the autocorrelation function from a substantial number of training samples, the maximum Doppler frequency deviation $f_d$ can be determined given $\tau$. Once $f_d$ is identified, it becomes feasible to infer speed information, indicating that a channel sequence can represent the relative velocity between the receiver and transmitter. This capability exceeds what can be achieved by analyzing a single channel at an instantaneous time $t$.

\subsection{Time-Aware Multilayer Perceptron}
In order to make the model aware of time information, the weights in Eq. (\ref{eq:att}) are designed to be time-step-dependent. 
The TMLP module is obtained by applying MLP on the time dimension:
\begin{equation}
  \boldsymbol{O} = (\text{ReLU}(\boldsymbol{X}^T\boldsymbol{W}_1)\boldsymbol{W}_2)^T,
\end{equation}
where $\text{ReLU}(\cdot)$ is a element-wise activation function, $\boldsymbol{W}_1$ and $\boldsymbol{W}_2$ are the weight matrices. So the weights used to weight the input vector $\boldsymbol{X}$ is time-step-dependent. TMLP is relatively simple but very effective in making our model LinFormer time-step-dependent. 
The TMLP can directly model the long-range dependency of the sequence and avoid cumulative errors. And thanks to the multiplication operator, it is lightweight and computationally efficient, which is suitable for channel prediction tasks.

\begin{algorithm}[htb]
  \SetAlgoLined
  \KwIn{$\boldsymbol{X}=[\boldsymbol{x}_1, \dots, \boldsymbol{x}_N]^T \in \mathbb{R}^{N \times d}$}
  \KwOut{$\boldsymbol{O}=[\boldsymbol{o}_1, \dots, \boldsymbol{o}_N]^T \in \mathbb{R}^{N \times d_v}$}
  Linear 1: $\boldsymbol{X}_1 = \boldsymbol{X}^T\boldsymbol{W}_1$\;
  ReLU: $\boldsymbol{X}_2 = \text{ReLU}(\boldsymbol{X}_1)$\;
  Linear 2: $\boldsymbol{O} = (\boldsymbol{X}_2\boldsymbol{W}_2)^T$\;
  \caption{TMLP}
  \label{Algorithm:TMLP}
\end{algorithm}




An additional benefit of the TMLP module is its enhanced interpretability, as depicted in Fig. \ref{fig:Linear_weight}. 
The channel, assumed to be a stationary random process, exhibits temporal dependencies that can be captured through time-step-dependent weights learned during the training process. Fig. \ref{fig:left} illustrates the TMLP module in the first encoder layer. The presence of both low-frequency components (evident in rows 10, 20, 24, 42, 43, 48, 62, and others) and high-frequency components suggests that TMLP has successfully learned both global and local features of the input sequence. The low-frequency components correspond to global features, such as the main frequency of the sequence, while the high-frequency components represent local features, like specific inflection points. This dual representation aligns well with the inherent characteristics of wireless channels, which typically exhibit both global trends and localized variations.

In contrast, Fig. \ref{fig:right} depicts the TMLP module in the final encoder layer. Here, the weight patterns appear less structured and more uniform. This simplification is indicative of the deeper layer's role in abstracting and integrating the lower-level features learned in earlier layers. As the network progresses through its layers, it refines its representation, consolidating the diverse features into a more compact, high-level understanding of the channel characteristics.
Our motivation for utilizing TMLP is driven by two key factors: the significance of temporal information in channel sequence prediction and the need for real-time inference performance.

\subsection{Encoder}

The LinFormer architecture builds upon the multi-block design of the Transformer \cite{Transformer}, enhancing its scalability. At its core, the encoder comprises a stack of $N_{enc} = 6$ identical layers, each containing two sub-layers: a TMLP and a feed-forward network (FFN). To maintain the integrity of information flow, each sub-layer incorporates a residual connection, followed by layer normalization \cite{xiong2020layer}.

As illustrated in step two of $\boldsymbol{Algorithm\ \ref{Algorithm:encoder}}$, we propose to substitute the TMLP module for the multi-head self-attention mechanism in the encoder of Transformer. This substitution proves crucial, as our subsequent experiments demonstrate a significant improvement in channel prediction accuracy. The remaining components of the encoder layer remain consistent with the original Transformer design \cite{Transformer}.

The synergy between these architectural elements creates a robust framework that effectively balances the extraction of local and global temporal features while capturing cross-channel relationships. This unique combination positions the LinFormer as a powerful tool for temporal channel prediction analysis in wireless communication systems.

\begin{algorithm}[htb]
  \SetAlgoLined
  \KwIn{$\boldsymbol{F}_{0} \in \mathbb{R}^{N \times d}$}
  \KwOut{$\boldsymbol{F}_{N_{enc}} \in \mathbb{R}^{N \times d}$}
  \For{$i=1$ \KwTo $N_{enc}$}{
      TMLP: $f_1 = \text{TMLP}_i(\boldsymbol{F}_{i-1})$\;
      Add \& Norm: $\boldsymbol{F}_i = \text{Norm}(f_1+\boldsymbol{F}_{i-1})$\;
      Feed Forward: $f_2 = \text{FFN}_i(\boldsymbol{F}_i)$\;
      Add \& Norm: $\boldsymbol{F}_{i} = \text{Norm}(f_2+\boldsymbol{F}_i)$\;
    }
  \caption{Encoder}
  \label{Algorithm:encoder}
\end{algorithm}

\begin{figure}[htb]
  \centering
  \includegraphics[width=0.3\textwidth]{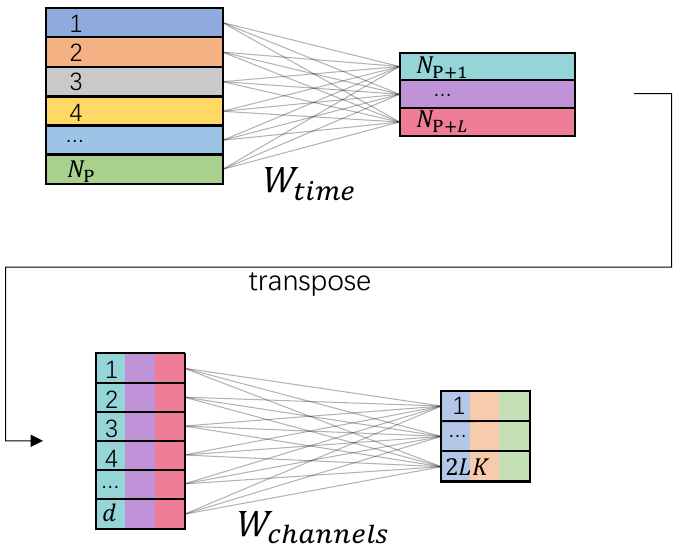}
  \caption{Dimension-wise separable linear head.}
  \label{fig:head}
\end{figure}

\subsection{Dimension-wise Separable Linear Head}
As shown in Fig. \ref{fig:head}, the prediction head consists of two linear layers.
Decomposing the weights of a fully connected matrix from $\boldsymbol{W}_\mathrm{FC} \in \mathbb{R}^{dN_\mathrm{P} \times 2N_\mathrm{L}RT}$ to $\boldsymbol{W}_\mathrm{time} \in \mathbb{R}^{N_\mathrm{P} \times N_\mathrm{L}}$ and $\boldsymbol{W}_\mathrm{channels} \in \mathbb{R}^{d \times 2RT}$ can significantly reduce the number of parameters.

Given the continuous representations $\boldsymbol{F} \in \mathbb{R}^{N_\mathrm{P} \times d}$, 
the DSLH can reduce the number of parameters to $N_\mathrm{P}N_\mathrm{L} + 2dRT$, which is much smaller than the fully connected matrix $\boldsymbol{W}_\mathrm{FC}$.
\begin{equation}
  \hat{\boldsymbol{H}}_\mathrm{future} = (\boldsymbol{F}^T\boldsymbol{W}_\mathrm{time})^T\boldsymbol{W}_\mathrm{channels},
\end{equation}
where $\boldsymbol{W}_\mathrm{time}$ is a fully connected matrix in the time dimension and $\boldsymbol{W}_\mathrm{channels}$ is a fully connected matrix in the channels dimension. 



\subsection{Loss Function}

To enhance the prediction accuracy of future channels, our approach incorporates a WMSELoss function. The MSE, a standard metric for regression tasks, quantifies the disparity between predicted and actual values, and is defined as
\begin{equation}
  MSE = \frac{1}{RTN_\mathrm{L}}\sum_{n=1}^{N_\mathrm{L}}||\boldsymbol{H}_\mathrm{future}^{(n)}-\hat{\boldsymbol{H}}_\mathrm{future}^{(n)}||_2^2,
\end{equation}
where $\boldsymbol{H}_\mathrm{future}$, $\hat{\boldsymbol{H}}_\mathrm{future}$ and $n$ denote the actual ideal future channels, the predicted future channels and the time step, respectively. The MSE weights errors at different time steps equally, which can be suboptimal for prediction tasks.

This limitation is addressed by considering the data processing inequality \cite{cover1999elements}, which suggests that data loses mutual information as it undergoes more processing. In the context of channel prediction, this implies that near-future channels are generally easier to predict due to their stronger correlation with historical channels compared to far-future channels.

To account for this temporal correlation, we derive weights for the proposed WMSELoss function based on the time-domain autocorrelation function of the channel information. We begin with the asymptotic expression of the zero-order Bessel function $J_0(n)$ for large $n$ \cite{abramowitz1968handbook}:

\begin{equation}
    J_0(n) \approx \sqrt{\frac{2}{\pi n}}\textit{cos}(n-\frac{\pi}{4}).
\end{equation}

Taking the absolute value and simplifying, We approximate the envelope by taking the absolute value and simplifying as:

\begin{equation}
    \textit{Envelope} \approx \sqrt{\frac{1}{n}}.
\end{equation}
Using the envelope of the autocorrelation function as the weight, the WMSELoss is defined as
\begin{equation}\label{WMSE_eq}
  WMSELoss = \frac{1}{RTN_\mathrm{L}}\sum_{n=1}^{N_\mathrm{L}}n^{-\frac{1}{2}}(\boldsymbol{H}_\mathrm{future}^{(n)}-\hat{\boldsymbol{H}}_\mathrm{future}^{(n)})^2.
\end{equation}
This formulation of WMSELoss, based on the time-domain autocorrelation of a single carrier and subchannel, allows for flexible extension to both frequency and spatial domains. As our current focus is on channel time series prediction for a single carrier, we will not compute the autocorrelation in frequency-domain and spatial-domain.


\subsection{Complexity Analysis}
This section presents a comparative analysis of the inference complexity between our proposed LinFormer model and the standard Transformer model. Our analysis primarily focuses on multiplication operations, as they are computationally more intensive than additions.

We begin by establishing that the complexity of multiplying matrices $\boldsymbol{A} \in \mathbb{R}^{a \times b}$ and $\boldsymbol{B} \in \mathbb{R}^{b \times c}$ is denoted as $abc$. This forms the foundation for our complexity estimations.

In the standard Transformer, the attention module's complexity is influenced by the input sequence length $N$, input sequence dimension $d$, and the dimensions of the key and value matrices ($d_k$ and $d_v$, respectively). Typically, $d_k$ and $d_v$ are set equal to $d$. Referring to $\boldsymbol{Algorithm\ \ref{Algorithm:Standard}}$, the complexity of matrix multiplications in steps 1-3 is $3Nd^2$. Step 4 involves two matrix multiplications with complexities $N^2d$ each. The multi-head attention mechanism introduces an additional output mapping $\boldsymbol{W}_o \in \mathbb{R}^{d \times d}$ \cite{Transformer}, contributing $Nd^2$ to the complexity. Thus, the total computational complexity of the standard self-attention module is $4Nd^2 + 2N^2d$.

The standard Transformer's decoder and output head contribute an additional complexity of $N_\mathrm{dec}(4N_\mathrm{L}d^2 + 2N_\mathrm{L}^2d + 2N_\mathrm{P}d^2 + 2N_\mathrm{L}d^2 + 2N_\mathrm{P}N_\mathrm{L}d + 8N_\mathrm{L}d^2) + 2N_\mathrm{L}dRT$, where $N_\mathrm{L}$ and $N_\mathrm{dec}$ represent the predicted sequence length and number of decoder layers, respectively.

In contrast, our proposed TMLP module, as outlined in $\boldsymbol{Algorithm\ \ref{Algorithm:TMLP}}$, has a computational complexity of $2N^2d$, derived from the matrix multiplications in steps 1 and 3.

We now present the computational complexity of our proposed LinFormer model:

\begin{proposition}\label{pro_cyclic}
The computational complexity of the proposed LinFormer is
\begin{equation}\label{complexity}
2N_\mathrm{enc}N_\mathrm{P}^2d + 2N_\mathrm{enc}N_\mathrm{P} d^2 + 2N_\mathrm{L}RTd + N_\mathrm{P}N_\mathrm{L}d,
\end{equation}
where $N_\mathrm{enc}$, $N_\mathrm{P}$, $N_\mathrm{L}$, and $d$ denote the number of encoder iterations, input sequence length, output sequence length, and model dimension, respectively.
\end{proposition}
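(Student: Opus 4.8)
The plan is to decompose LinFormer into its sequential stages—the stack of $N_\mathrm{enc}$ identical encoder layers followed by the DSLH prediction head—and to tally the multiplication cost of each stage, then sum. Because the model is purely feed-forward with the attention block replaced by TMLP, every contribution reduces to counting matrix-multiplication costs under the established rule that multiplying $\boldsymbol{A}\in\mathbb{R}^{a\times b}$ by $\boldsymbol{B}\in\mathbb{R}^{b\times c}$ costs $abc$. Throughout the encoder the sequence length is the input length $N_\mathrm{P}$, so every occurrence of $N$ in the sub-module analyses is specialized to $N=N_\mathrm{P}$.

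First I would isolate a single encoder layer, which per Algorithm~\ref{Algorithm:encoder} comprises a TMLP sub-layer and an FFN sub-layer; the residual additions and the two layer normalizations contribute only additions and element-wise scalings and are therefore negligible in the multiplication count. The TMLP cost is already fixed at $2N^2 d = 2N_\mathrm{P}^2 d$ by Algorithm~\ref{Algorithm:TMLP}, arising from the two products $\boldsymbol{X}^T\boldsymbol{W}_1$ and $\boldsymbol{X}_2\boldsymbol{W}_2$. For the FFN I would write it as two linear maps $\mathbb{R}^d\!\to\!\mathbb{R}^d\!\to\!\mathbb{R}^d$ applied across all $N_\mathrm{P}$ time steps, giving $N_\mathrm{P}d^2 + N_\mathrm{P}d^2 = 2N_\mathrm{P}d^2$. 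Summing the two sub-layers yields $2N_\mathrm{P}^2 d + 2N_\mathrm{P}d^2$ per layer, and multiplying by the $N_\mathrm{enc}$ identical layers produces exactly the first two terms $2N_\mathrm{enc}N_\mathrm{P}^2 d + 2N_\mathrm{enc}N_\mathrm{P}d^2$ of the claim.

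Next I would account for the DSLH, whose output is $\hat{\boldsymbol{H}}_\mathrm{future} = (\boldsymbol{F}^T\boldsymbol{W}_\mathrm{time})^T\boldsymbol{W}_\mathrm{channels}$ with $\boldsymbol{F}\in\mathbb{R}^{N_\mathrm{P}\times d}$, $\boldsymbol{W}_\mathrm{time}\in\mathbb{R}^{N_\mathrm{P}\times N_\mathrm{L}}$, and $\boldsymbol{W}_\mathrm{channels}\in\mathbb{R}^{d\times 2RT}$. Respecting the indicated parenthesization, the first product $\boldsymbol{F}^T\boldsymbol{W}_\mathrm{time}\in\mathbb{R}^{d\times N_\mathrm{L}}$ costs $dN_\mathrm{P}N_\mathrm{L}$, and the second product of the transposed result with $\boldsymbol{W}_\mathrm{channels}$, landing in $\mathbb{R}^{N_\mathrm{L}\times 2RT}$, costs $2N_\mathrm{L}dRT$. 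These supply the remaining terms $N_\mathrm{P}N_\mathrm{L}d$ and $2N_\mathrm{L}RTd$, and adding all four contributions gives Eq.~(\ref{complexity}).

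The main obstacle is pinning down the FFN cost, since the standard Transformer FFN uses a $4d$ inner expansion (costing $8Nd^2$, consistent with the $8N_\mathrm{L}d^2$ decoder term quoted earlier), whereas the target expression forces a non-expanding $\mathbb{R}^d\!\to\!\mathbb{R}^d\!\to\!\mathbb{R}^d$ choice to yield $2N_\mathrm{enc}N_\mathrm{P}d^2$; I would therefore state this lightweight FFN dimensioning explicitly so that term emerges cleanly. A secondary subtlety is justifying the evaluation order of the DSLH: multiplying in the written order keeps the cost linear in $RT$, whereas materializing the full $\boldsymbol{W}_\mathrm{FC}\in\mathbb{R}^{dN_\mathrm{P}\times 2N_\mathrm{L}RT}$ product first would reintroduce the large cost the decomposition is meant to avoid, so I would emphasize that the parenthesization is precisely what realizes the claimed complexity.
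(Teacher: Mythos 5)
Your proposal is correct and follows essentially the same route as the paper's own proof: count multiplications module by module (TMLP at $2N_\mathrm{P}^2d$ per layer, FFN at $2N_\mathrm{P}d^2$ per layer, summed over $N_\mathrm{enc}$ layers, plus the DSLH head's $N_\mathrm{P}N_\mathrm{L}d + 2N_\mathrm{L}RTd$) and add the pieces. Your two flagged subtleties—that the formula forces a non-expanding $d\to d\to d$ FFN rather than the standard $4d$-expanded one, and that the DSLH cost depends on respecting the written parenthesization—are points the paper glosses over (it simply asserts the FFN cost ``remains $2Nd^2$''), so making them explicit only strengthens the argument.
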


\begin{proof}
The complexity of the feed-forward network remains $2Nd^2$, consistent with the Transformer. For an input sequence of length $N_\mathrm{P}$, the encoder's complexity is $2N_\mathrm{enc}N_\mathrm{P}^2d + 2N_\mathrm{enc}N_\mathrm{P}d^2$, where $N_{enc}$ represents the number of encoder iterations. The channel prediction head, following DSLH, contributes $N_\mathrm{P}N_\mathrm{L}d + 2N_\mathrm{L}RTd$ to the complexity. Summing these components yields the total computational complexity of LinFormer as expressed in Eq. (\ref{complexity}).
\end{proof}

This analysis demonstrates that the LinFormer model achieves significantly lower complexity compared to the standard Transformer. This reduction is achieved by replacing the attention module with the TMLP module and adopting an encoder-only architecture. The resulting low complexity makes the LinFormer model well-suited for practical wireless communication systems.

\section{Experiments and Discussions}


\subsection{Simulation Settings}
We consider a MIMO system with $R=2$ receive antennas and $T=4$ transmit antennas. The prediction model utilizes $N_\mathrm{P}$ frames of historical CSI $\hat{\boldsymbol{H}}_\mathrm{past}$ to predict $N_\mathrm{L}$ frames of future CSI $\hat{\boldsymbol{H}}_\mathrm{future}$. We examine various combinations of past and future frame numbers, with $N_\mathrm{P}$ taking values of 30, 60, or 90, and $N_\mathrm{L}$ being either 10 or 30. Consequently, each data sample spans a total of $N_\mathrm{P} + N_\mathrm{L}$ frames. In alignment with the 3GPP standard \cite{3gpp.38.331}, we maintain a consistent time interval of 0.625 ms between consecutive frames. This interval corresponds to the duration of a single Sounding Reference Signal (SRS) transmission, ensuring our simulation adheres to practical timing constraints in wireless communications.

\begin{figure}[htb]
  \centering
  \includegraphics[width=0.5\textwidth]{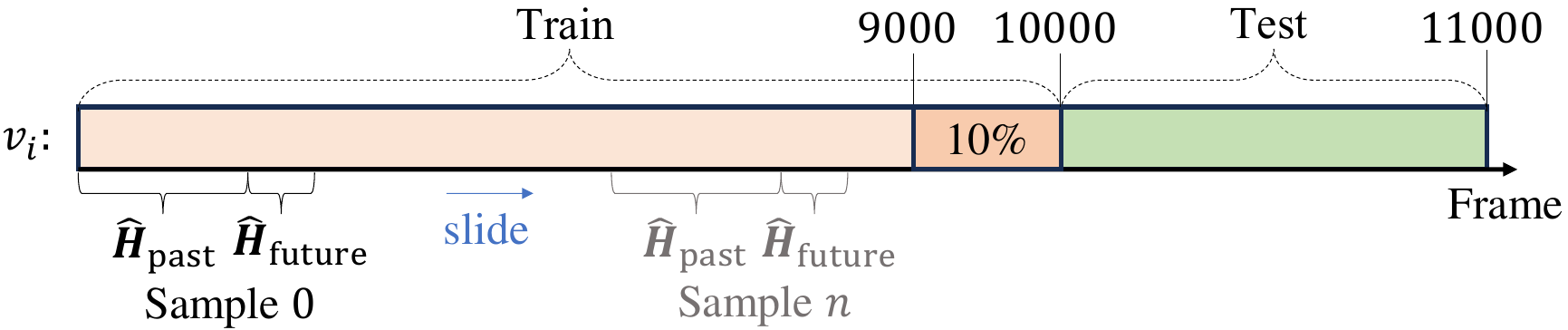}
  \caption{A sliding window used at each speed to obtain the training and test sets.}
  \label{fig:dataset}
\end{figure}

\begin{figure}[htb]
 \centering
 \includegraphics[width=0.35\textwidth]{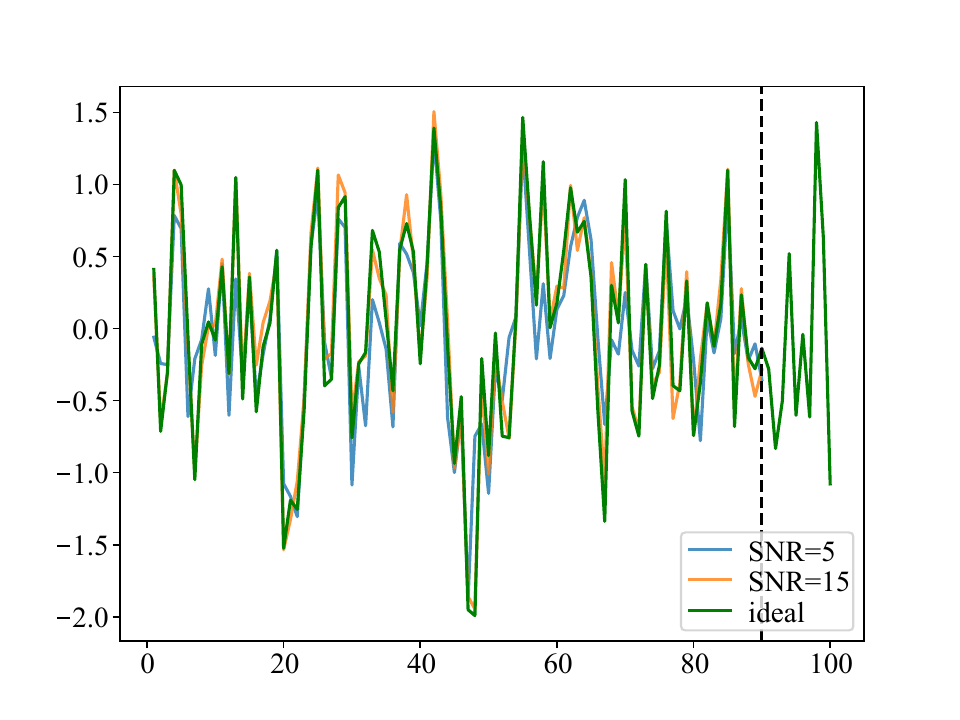}
 \caption{Data augmentation for improving prediction accuracy at low SNRs.}
 \label{fig:DA}
\end{figure}
\subsubsection{CSI Data}
To enhance the robustness of our proposed framework, we incorporate a diverse range of CSI parameters in the training process. These parameters include varying user speeds, delay spreads, and SNRs. Specifically, User speeds are uniformly distributed between $[v_\mathrm{min}, v_\mathrm{max}]$, while delay spreads, which account for multi-path effects, are randomly assigned values between 50 and 300 ns. The SNR, denoted as $\gamma$, is modeled as a uniform random variable ranging from 0 to 20 dB.

Standardized clustered delay line (CDL) model \cite{3gpp.38.901} for cellular communications is used in our simulation experiments. The CDL model complements the Saleh-Valenzuela model by providing further flexibility in modeling complex scenarios with different propagation characteristics. Although the Saleh-Valenzuela model is suitable for various environments, it may require parameter adjustments for different scenarios. The CDL model builds upon the Saleh-Valenzuela model by considering more complex multipath environments, specifying the delays, gains, and phases of paths through a delay line model. The CDL model offers different configurations (such as CDL-A, CDL-B, etc.) to accommodate various propagation environments, including urban, suburban, and indoor settings, making its application more flexible across different scenarios.
Fig. \ref{fig:dataset} illustrates our dataset generation process. For each randomly selected speed $v_i$ and delay spread, we generate 11,000 frames of ideal channel $\boldsymbol{H}_i \in \mathbb{C}^{11000 \times R \times T}$ using the 5G toolbox \cite{matlab5gtoolbox} in MATLAB, based on the CDL-B model. 
The first 10,000 frames and the remaining 1,000 frames constitute the training and test dataset, respectively. We have $10,001-(N_\mathrm{P} + N_\mathrm{L})$ training samples and $1,001-(N_\mathrm{P} + N_\mathrm{L})$ test samples. 

To further improve the model's performance across different SNR conditions, we employ a data augmentation technique, as depicted in Fig. \ref{fig:DA}. We introduce noise with random SNR to the historical CSI ${\boldsymbol{H}}_\mathrm{past}$, resulting in estimated past channels $\hat{\boldsymbol{H}}_\mathrm{past}$. These estimated channels serve as the input for our training samples. Importantly, while the input to the neural network is the re-estimated $\hat{\boldsymbol{H}}_\mathrm{past}$, the target output remains the noise-free future CSI $\boldsymbol{H}_\mathrm{future}$. This approach enables our model to predict accurate future channels across a spectrum of SNR conditions.

\subsubsection{Model Parameters and Comparison Models}
To ensure a fair comparison among the different models, we standardized certain architectural parameters. Specifically, we set the number of hidden layers in the GRU model, the number of encoder layers in the LinFormer model, and the number of encoder and decoder layers in the Transformer model to 6. For both the LinFormer and Transformer models, we set the output dimension $d$ of the encoder as 512.

This configuration results in distinct parameter counts for each model. The baseline Transformer model, with full attention mechanism, and the GRU contain approximately 30 M parameters. In contrast, the base LinFormer model, which employs a linear attention mechanism, has about 15 million parameters, reflecting its more compact architecture.
\subsubsection{Hardware Platform and Model Training}
We train the proposed and comparative models on a high-performance NVIDIA GeForce RTX 4090 GPU. We implemented the AdamW optimizer \cite{AdamW} with carefully tuned hyperparameters: a maximum learning rate of 0.0004, a batch size of 500, and a weight decay of 0.01. The training process spanned 100 epochs for all models. To optimize the learning process, we employed the 1cycle learning rate annealing policy \cite{OneCycleLR}, which dynamically adjusts the learning rate throughout the training.

For the testing phase, we transitioned to a more widely available GPU, the NVIDIA GeForce GTX 1050 Ti. This choice was motivated by its greater accessibility in practical applications, allowing us to evaluate our models under conditions more representative of real-world scenarios. This approach provides insights into the performance of each model on hardware commonly found in typical deployment environments.

\begin{figure}[htb]
  \centering
  \subfloat[The input sequence lengths of 30, 60 and 90 and output sequence length of 10.] {
\label{fig:len_vs_error}
\includegraphics[width=0.4\textwidth]{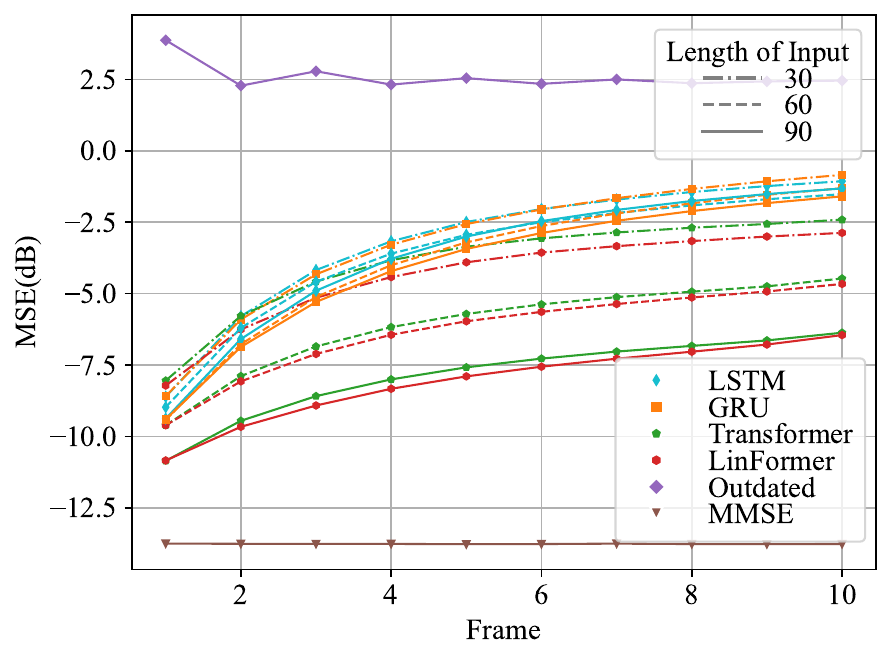} 
    }\
  \subfloat[The input sequence length of 90 and output sequence lengths of 10 and 30.] {
\label{fig:len_vs_error_10_30}
\includegraphics[width=0.4\textwidth]{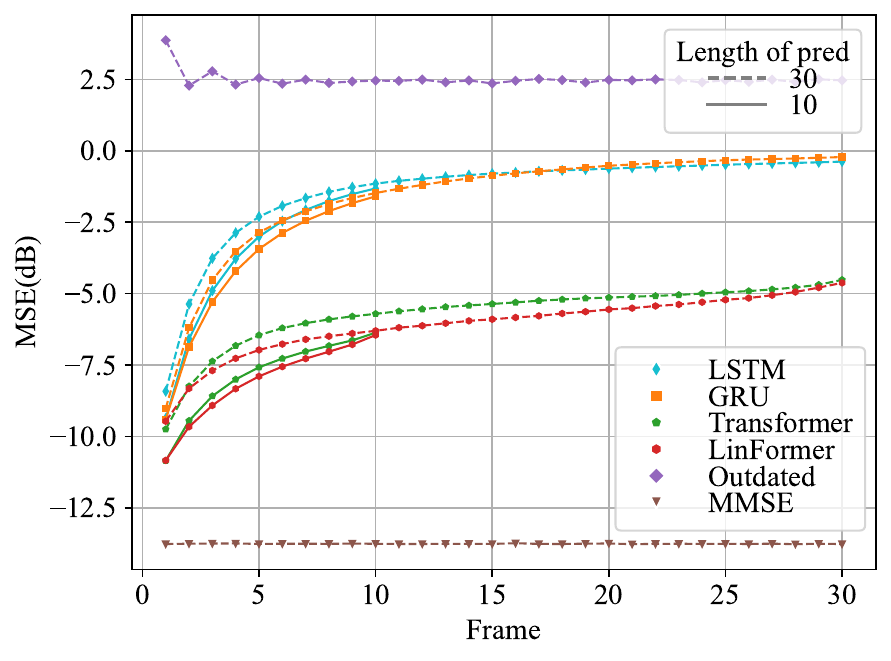} } 
\caption{Chanel prediction performance of different length of the input historical channel sequence and the predicted future channel sequence.}
\label{fig:len_vs_error2}
\end{figure}

\begin{figure}[htbp]
\centering\includegraphics[width=0.4\textwidth]{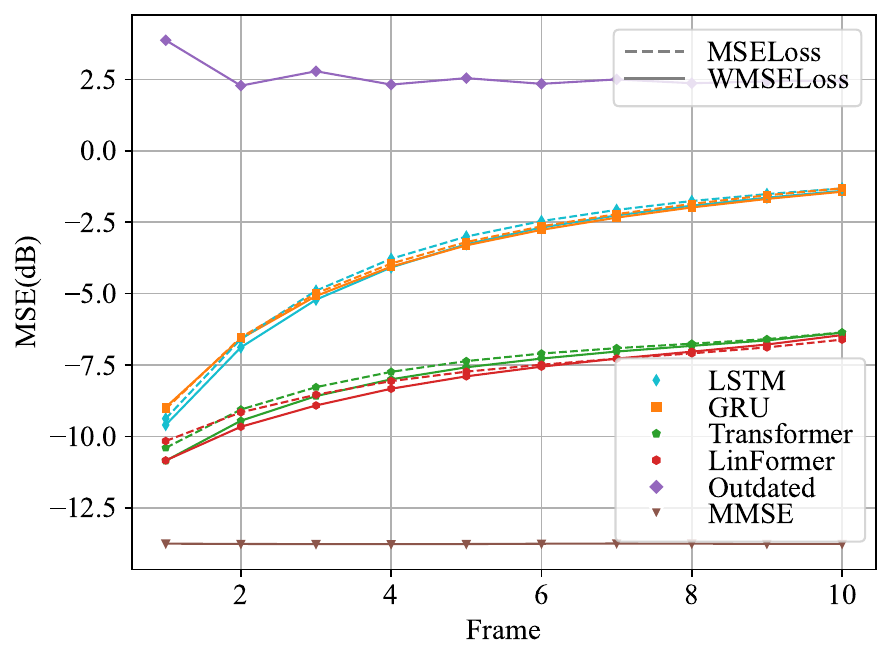}
\caption{MSE performance of different models using MSE loss and WMSE loss}
\label{fig:WMSE}
\end{figure}

\subsection{Performance Analysis Using Simulated CSI}
\subsubsection{The length of input and output}
Fig. \ref{fig:len_vs_error2} illustrates a comparative analysis of the proposed LinFormer, Transformer, GRU and LSTM models in terms of channel prediction MSE for each frame, considering various input and output sequence lengths. As illustrated in Fig. \ref{fig:len_vs_error}, both the Transformer and LinFormer models demonstrate significant performance improvements with longer input sequences. In contrast, the GRU and LSTM models show only marginal improvements under similar conditions. These results suggest that Transformer and LinFormer architectures are more capable at leveraging extended input sequences, effectively modeling long-term dependencies and extracting relevant features. Conversely, the inherent limitations of GRU and LSTM models in retaining long-term information hinder their ability to fully utilize extended input sequences.

Fig. \ref{fig:len_vs_error_10_30} reveals that LinFormer and Transformer models exhibit a slight advantage over GRU when using shorter output sequences. A notable finding is the substantial increase in channel prediction error for GRU and LSTM models between the first and tenth frame, attributable to cumulative error propagation. In contrast, LinFormer and Transformer models, leveraging parallel computing capabilities, demonstrate a considerably smaller increase in channel prediction error between the first and tenth frame.


\subsubsection{WMSELoss over Predicting Frames}
Analysis of Fig. \ref{fig:len_vs_error_10_30} reveals that the initial future frames exhibit the highest prediction accuracy, making them particularly valuable for base station operations such as precoding or beamforming. In light of this observation, we propose the implementation of WMSELoss function for model training. The resultant channel prediction error is depicted in Fig. \ref{fig:WMSE}. The results demonstrate notable performance improvements for both LinFormer and Transformer models between the first and fourth frames. In contrast, the GRU model and LSTM do not exhibit similar enhancements. Interestingly, the MSE performance for frames five through ten remains relatively stable, showing only minimal degradation. This stability can be attributed to the weighting scheme applied to the MSE at each time step, which is derived from the autocorrelation of the time channel sequence as defined in Eq. (\ref{WMSE_eq}).

\subsubsection{Robustness to Low SNR}
\begin{figure}[htb]
  \centering
  \includegraphics[width=0.4\textwidth]{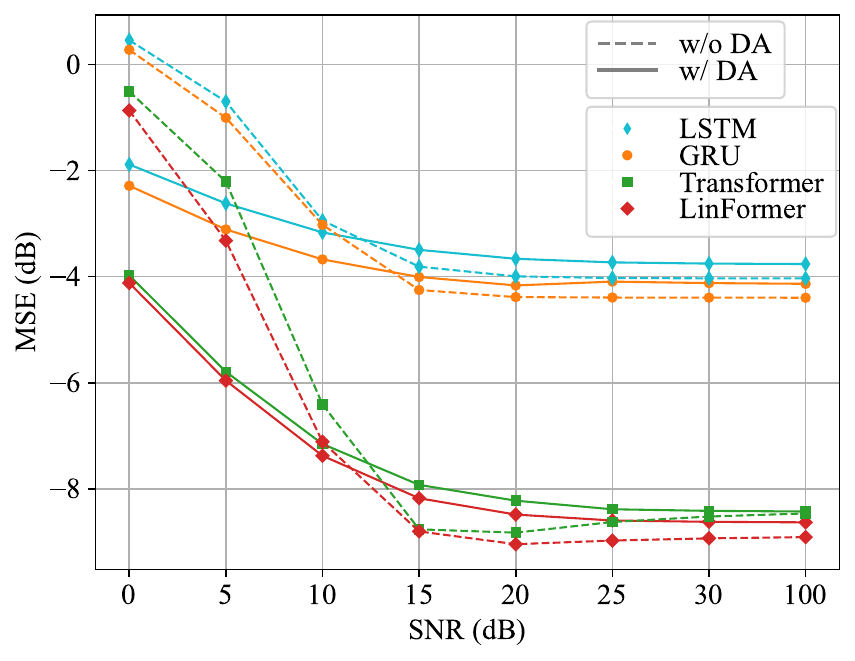}
  \caption{Robustness against different SNRs of different models with or without data augmentation. MSE averaged over all time steps and test samples.}
  \label{fig:SNR_vs_error}
\end{figure}
\begin{figure}[htbp]
\centering\includegraphics[width=0.4\textwidth]{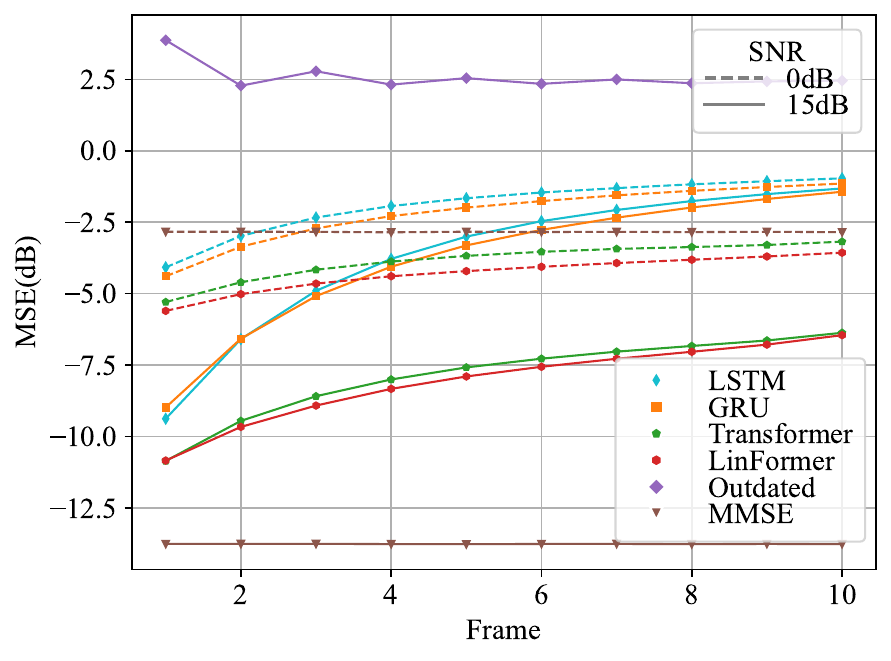}
\caption{MSE performance of different models under SNR$=0dB$ and $15dB$.}
\label{fig:t_MSEdB_SNR}
\end{figure}

To improve the generalization of our model under various SNRs, we propose a data augmentation methoddetailed in Section V.A. This method involves applying MMSE channel estimation to the input historical channels over an SNR range of 0 to 20 dB, yielding the model input $\hat{\boldsymbol{H}}_\mathrm{past}$.

Without data augmentation, all training samples are estimated at an SNR of 15 dB. Fig. \ref{fig:SNR_vs_error} illustrates that models employing data augmentation demonstrate enhanced MSE performance, particularly at low SNRs. Notably, the Transformer model without data augmentation achieves optimal MSE performance near 15 dB but exhibits performance degradation at higher SNRs, suggesting overfitting to the training conditions. Fig. \ref{fig:t_MSEdB_SNR} presents a comprehensive evaluation of all models at 0 dB and 15 dB, showcasing MSE performance across the entire prediction horizon. The results reveal that our proposed LinFormer model, when trained with data augmentation, achieves superior performance at SNR = 0 dB. It not only outperforms the Transformer model but also surpasses the immediate MMSE channel estimation for future time steps. This outcome underscores the generalization ability of LinFormer in low SNR environments.

\subsubsection{Model Analysis}
\begin{figure}[htb]
  \centering
  \includegraphics[width=0.4\textwidth]{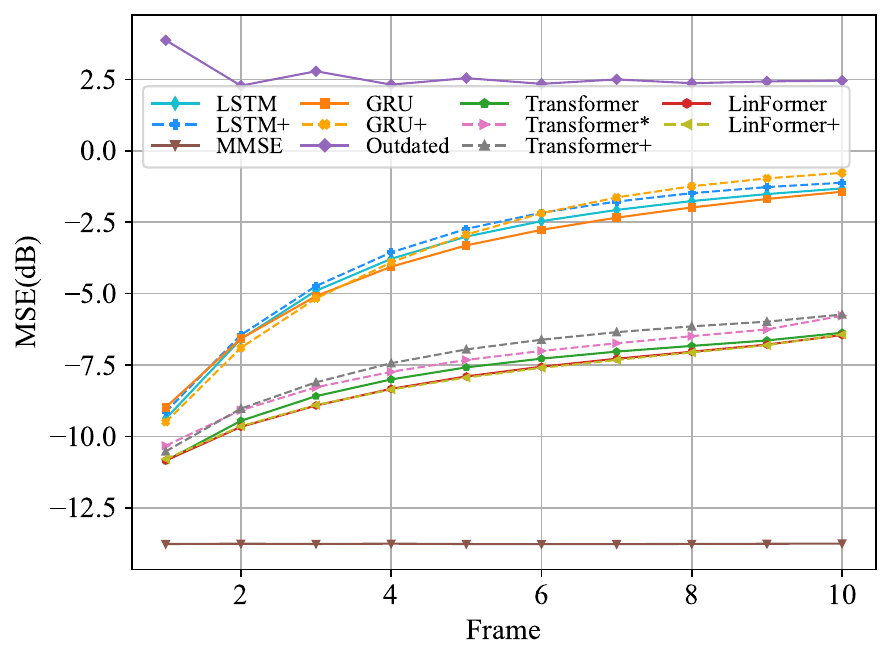}
  \caption{Module Analysis. '*' denotes the encoder-only Transformer architecture. '+' denotes the models trained on shuffled input sequence with a shared permutation.}
  \label{fig:model_analysis}
\end{figure}

\begin{figure}[htb]
  \centering
  \includegraphics[width=0.22\textwidth]{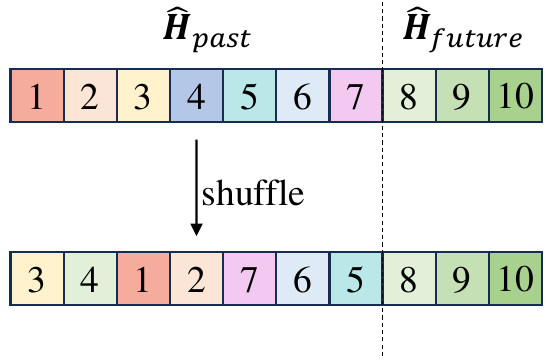}
  \caption{Shuffle all input sequences in the same permutation.}
  \label{fig:shuffle}
\end{figure}
Fig. \ref{fig:model_analysis} presents a comparative analysis between the TMLP module and the standard attention module. In this experiment, we replace the standard Transformer's decoder with the DSLH, as detailed in Section IV.E. The results indicate that the MSE performance deteriorates after the removal of the Transformer's decoder component. Nonetheless, within the encoder-only architecture, LinFormer demonstrates a significant improvement in performance by simply replacing the attention mechanism with the proposed TMLP module. To assess the sensitivity of the channel prediction models to temporal information in the input sequence, we conducted an experiment illustrated in Fig. \ref{fig:shuffle}. This involved shuffling the 90 time steps of each sample input using a consistent permutation across all samples. 

This result aligns with expectations: although the Transformer employs positional encoding, it still suffers from a loss of positional information. Consequently, the Transformer model faces challenges in accurately capturing the maximum Doppler frequency shift characteristics and differentiating between varying user terminal speeds, resulting in diminished prediction accuracy. Regarding RNN-like models, such as LSTM and GRU, their performance degradation can be attributed to the constraints of their hidden state size, which may lead to the loss of long-range dependencies in the input sequence.

As illustrated in Fig. \ref{fig:shuffle}, we shuffled the 90 time steps of each sample input according to the same permutation to investigate the model's sensitivity to the time information of the input sequence. 
The "+" symbol indicates that the model is trained on this shuffled input sequence across all samples. 

In contrast, our proposed LinFormer model, featuring time-step-dependent weights, demonstrates remarkable resilience, maintaining consistent performance even when subjected to identical input sequence shuffling.

\begin{figure}[htb]
  \centering
  \includegraphics[width=0.4\textwidth]{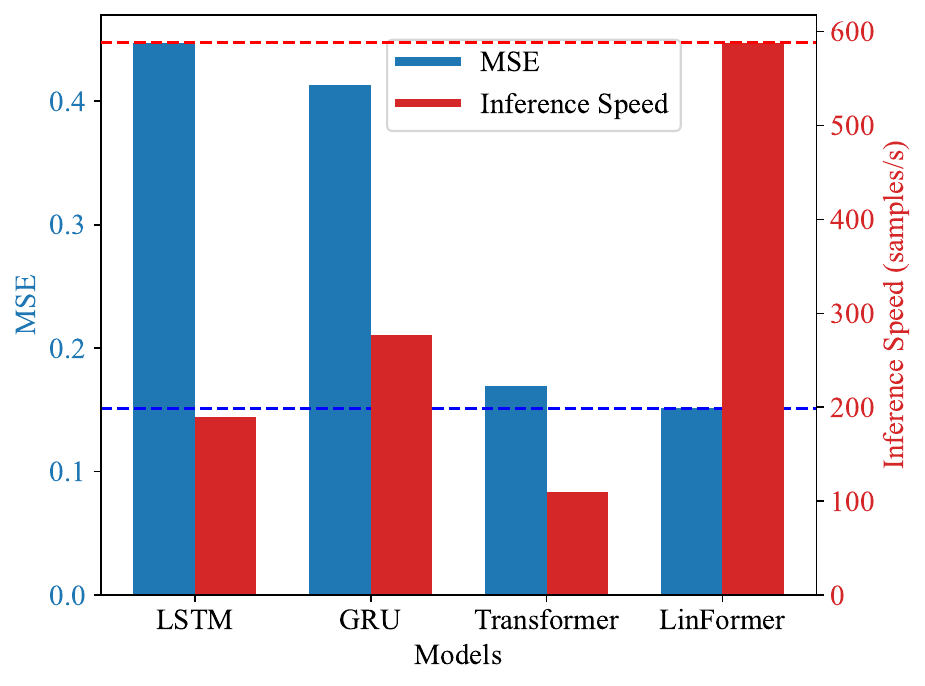}
  \caption{A comparison between LSTM, GRU, Transformer and LinFormer in terms of inference speed and MSE.}
  \label{fig:inference_speed}
\end{figure}

\subsubsection{Inference Speed}

Fig. \ref{fig:inference_speed} illustrated the inference speed and MSE of LSTM, GRU, Transformer and LinFormer in case of using 90 frames to predict 10 frames on a single NVIDIA GTX 1050 Ti GPU. 
The results demonstrate that LinFormer outperforms other models, achieving both the lowest MSE and the highest inference speed. The inference times for Transformer, LSTM, and GRU models range from approximately 4 to 10 ms. 
This duration is comparable to 10 SRS periods, as defined by industry standards. 
This means that these models are not fast enough to be applied in practical systems. In contrast, the proposed LinFormer model completes its predictions within 2 ms, demonstrating significant potential for practical applications in time-sensitive wireless communication scenarios.
\subsubsection{Scaling Efficiency}

\begin{figure}[htb]
  \centering
  \includegraphics[width=0.4\textwidth]{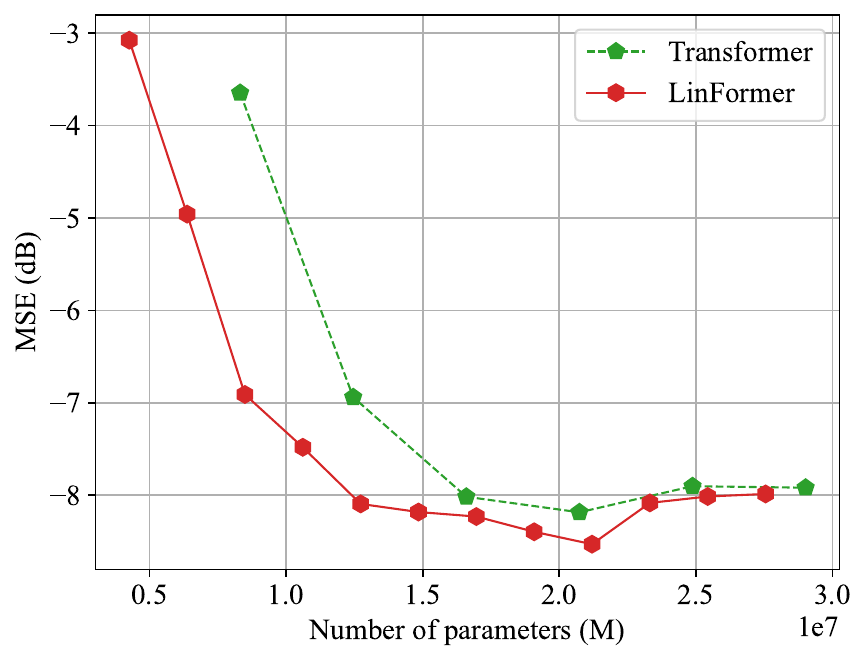}
  \caption{Scaling efficiency: the performance of LinFormer and Transformer with various model volumes. }
  \label{fig:params_vs_error}
\end{figure}

Fig. \ref{fig:params_vs_error} illustrates a comparative analysis of the MSE performance between the Transformer and LinFormer architectures across various parameter configurations. The MSE here is the average of all time steps and test samples.  We compared the performance of Transformer and LinFormer architectures with varying parameter volumes. Specifically, we increased the number of encoder layers in LinFormer to raise its parameter count, while for Transformer, we increased the number of both encoder and decoder layers. 

Interestingly, we observed that beyond a certain threshold of parameter count, the MSE performance on the test set began to deteriorate for both models. This phenomenon, indicative of overfitting, may be attributed to the limited size of our dataset. This finding underscores the importance of balancing model complexity with the available training data to achieve optimal performance in channel prediction tasks.

\subsubsection{Data Volume}
\begin{figure}[htb]
  \centering
  \subfloat[Channel prediction MSE performance]{
      \label{fig:t_MSEdB_data}
      \includegraphics[width=0.4\textwidth]{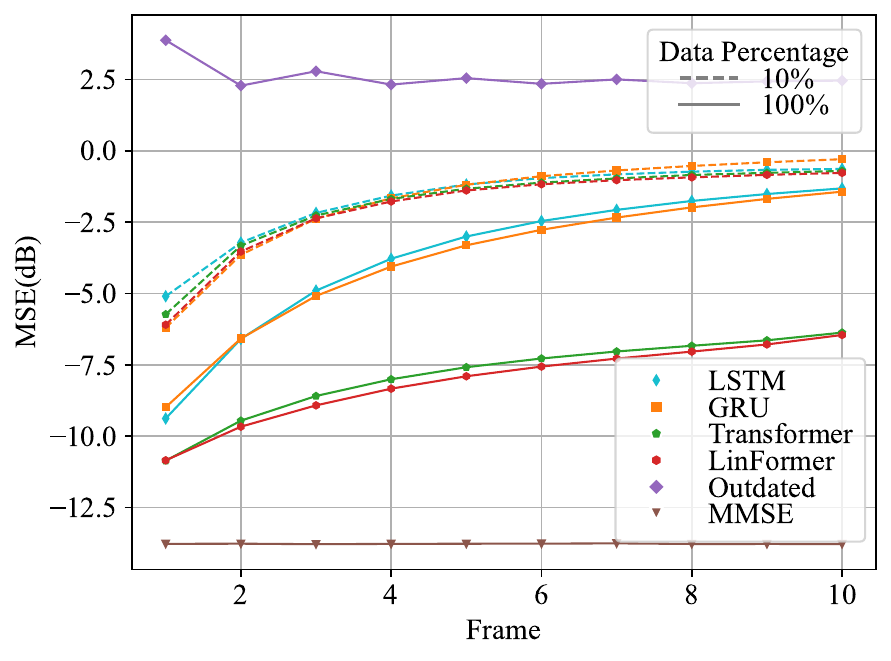}
    } \\ 
  \subfloat[MRT channel capacity using the predicted CSI]{
      \label{fig:t_MRT_data}
      \includegraphics[width=0.4\textwidth]{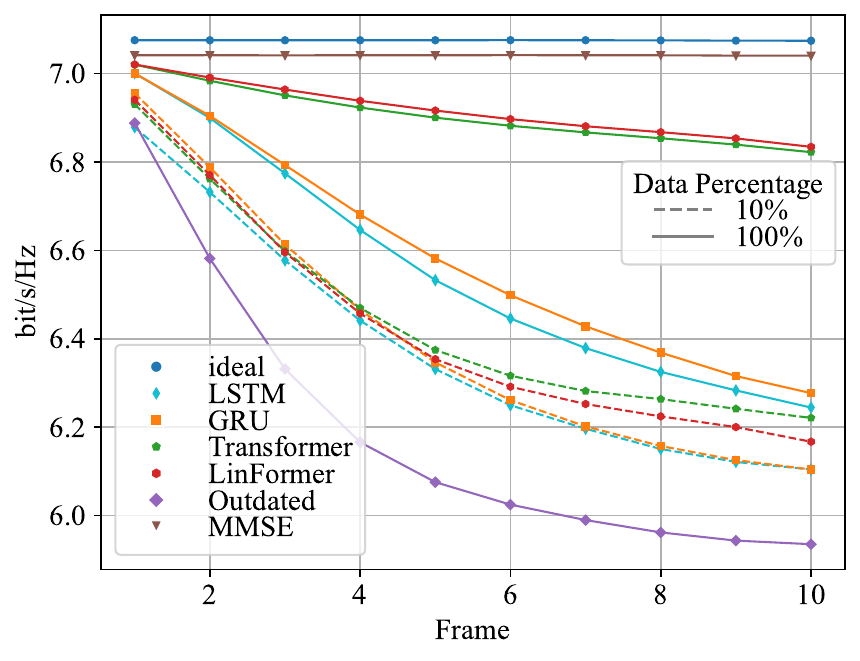}
    }
  \caption{Comparison of model performance trained using different volumes of training data.}
  \label{fig:data_percent}
\end{figure}

As shown in Fig. \ref{fig:t_MSEdB_data}, the MSE performance of all models decreases as the amount of training data increases. The LinFormer model has the best performance.
With a limited 10\% amount of training data, we achieved results similar to those in \cite{stenhammar2024comparison}, with GRU performing the best in the nearest future time steps. However, as the amount of training data increased, both Transformer and LinFormer showed significant improvements, whereas GRU and LSTM demonstrate the much smaller improvement.

As shown in Fig. \ref{fig:t_MRT_data}, the MRT channel capacity of all models increases as the amount of training data increases. We also observed that when the MSE is relatively high, the channel capacity with MRT beamforming is not inversely correlated with MSE. This occurs because, with the MRT beamforming method, the predicted channel matrix is summed over the transmit antenna dimension $T$.

\begin{figure}[htb]
  \centering
  \subfloat[Speed from from $30$ to $60$ km/h]{
      \label{fig:Speed30_60}
      \includegraphics[width=0.4\textwidth]{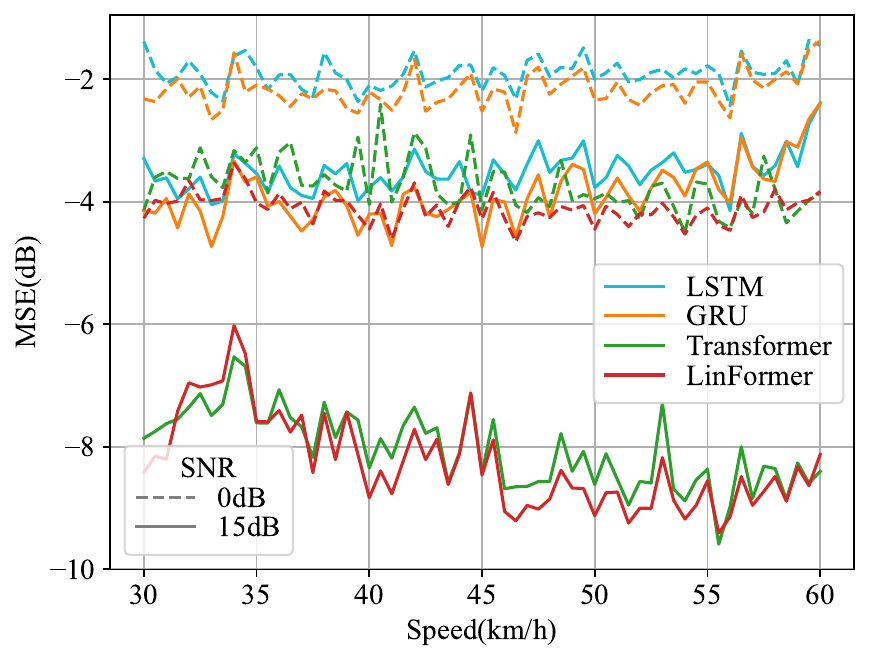}
    }\
  \subfloat[Speed from from $0$ to $300$ km/h]{
      \label{fig:Speed0_300}
      \includegraphics[width=0.4\textwidth]{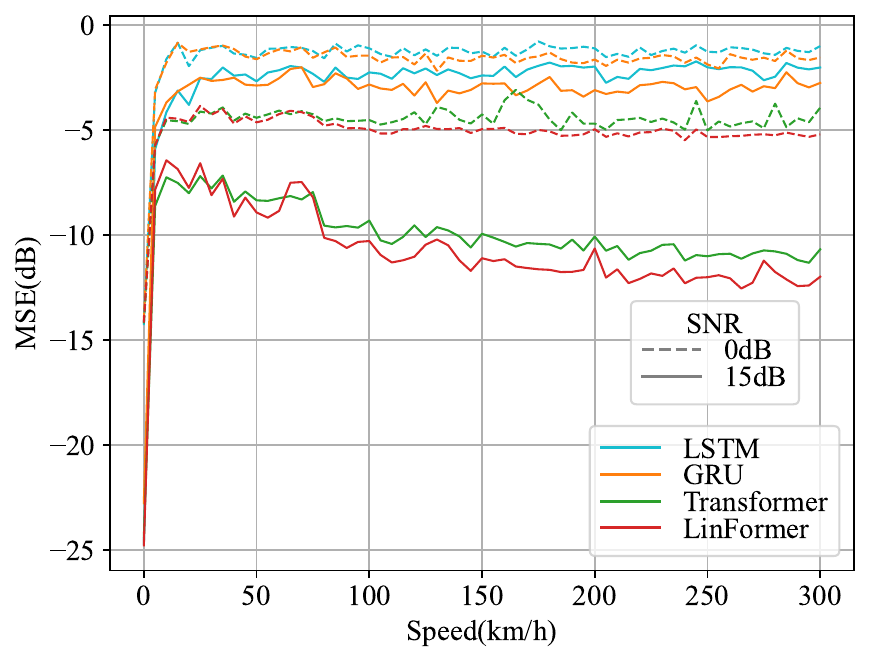}
    }
  \caption{Comparison of model performance with various speed range.}
  \label{fig:speed_com}
\end{figure}

\subsubsection{Performance Vs. Speed}

As illustrated in Fig. \ref{fig:speed_com}, and \ref{fig:Speed0_300}, we tested all models at SNR = 0 dB and 15 dB across speeds ranging from $30$ to $60$ km/h in Fig. \ref{fig:Speed30_60} and $0$ to $300$ km/h in Fig. \ref{fig:Speed0_300}, respectively. Additionally, we observe that under speed settings ranging from $0$ to $300$ km/h, the overall MSE performance surpasses that of speeds from $30$ to $60$ km/h. This improvement can be attributed to the training dataset containing approximately 6 million samples at speeds from 0 to 300 km/h, which is ten times greater than those available for speeds from 30 to 60 km/h. The richer training dataset enables the model to learn more generalized features, thereby achieving enhanced performance.

\begin{figure}[htb]
  \centering
  \includegraphics[width=0.35\textwidth]{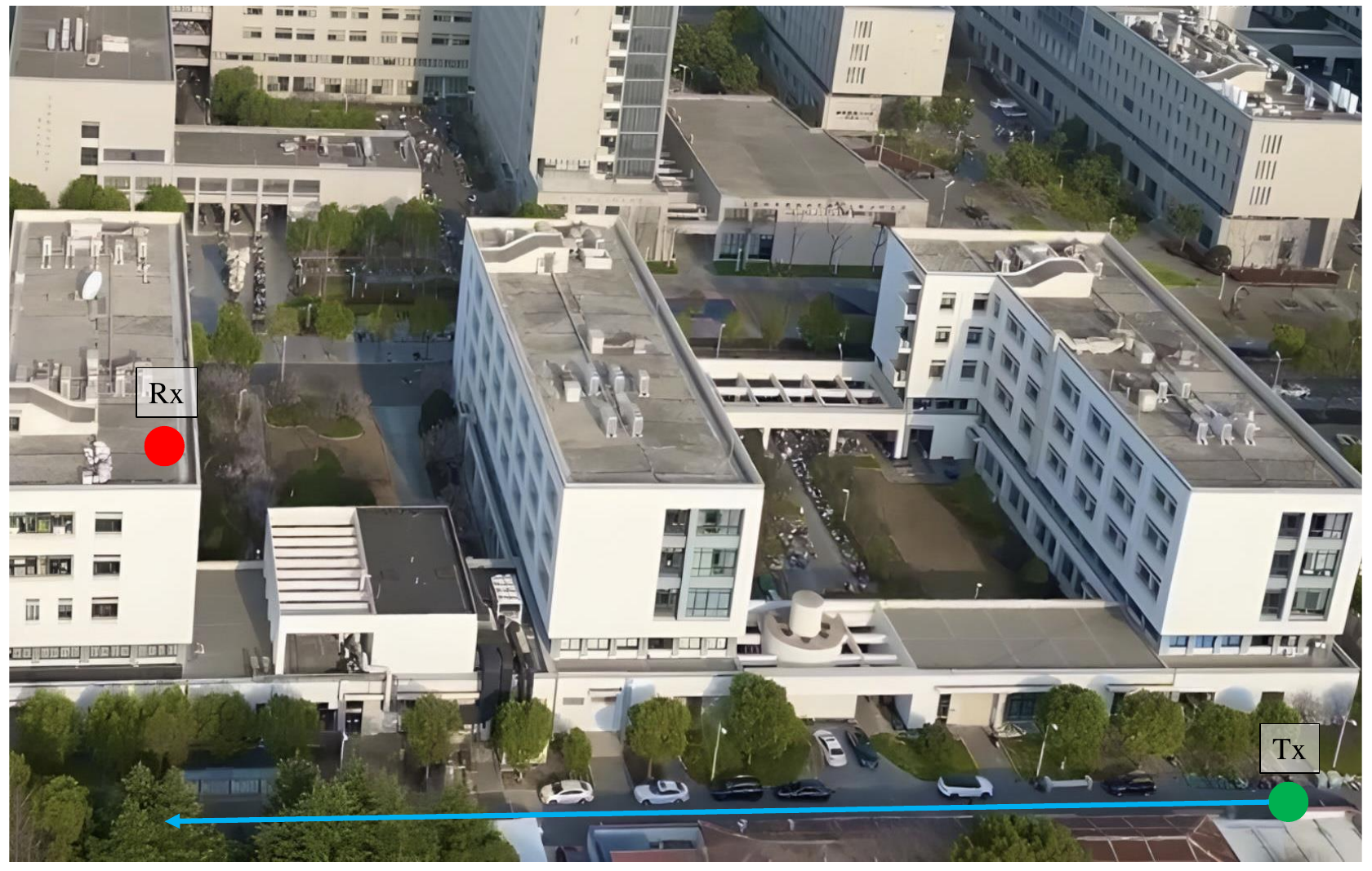}
  \caption{Illustration of CSI measurement: a vehicle (blue) moving towards a road side unit (red) on campus \cite{peng2022novel}.}
  \label{fig:SHU}
\end{figure}

\begin{table}[htbp]
    \caption{Measurement Configurations}
    \centering
    \begin{tabular}{|c|c|}
        \hline
        \textbf{Configuration} & \textbf{Value} \\ \hline
        Antenna & \makecell[c]{Omnidirectional\\ Vertical Polarization} \\ \hline
        Transmitted Sequence & Pseudo-Noise Code \\ \hline
        Sample Rate (MHz) & 200 \\ \hline
        Number of Samples in Each Symbol & 1023 \\ \hline
        Synchronization Mode & Rubidium Clock \\ \hline
        Observation Window (s) & 1.44 \\ \hline
        Prediction Window (s) & 0.16 \\ \hline
    \end{tabular}
    \label{tab:Configurations}
\end{table}

\setcounter{figure}{19}
\begin{figure*}[htb]
  \centering
  \subfloat[LSTM]{
      \label{fig:H_0_LSTM}
      \includegraphics[width=0.22\textwidth]{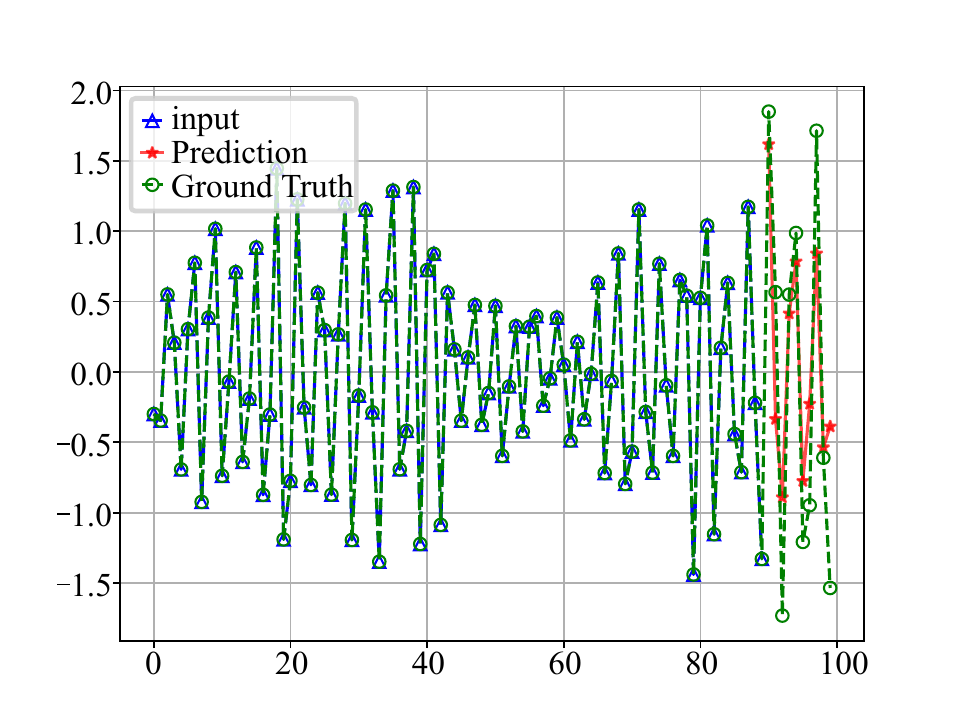}
    }
  \subfloat[GRU]{
      \label{fig:H_0_GRU}
      \includegraphics[width=0.22\textwidth]{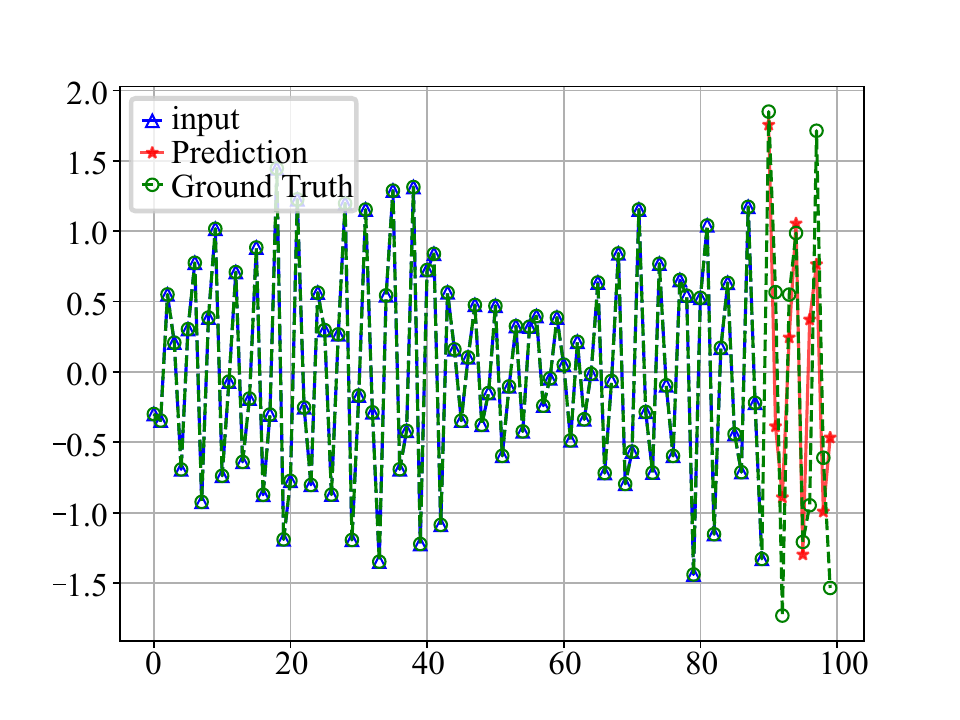}
    }
  \subfloat[Transformer]{
      \label{fig:H_0_Transformer}
      \includegraphics[width=0.22\textwidth]{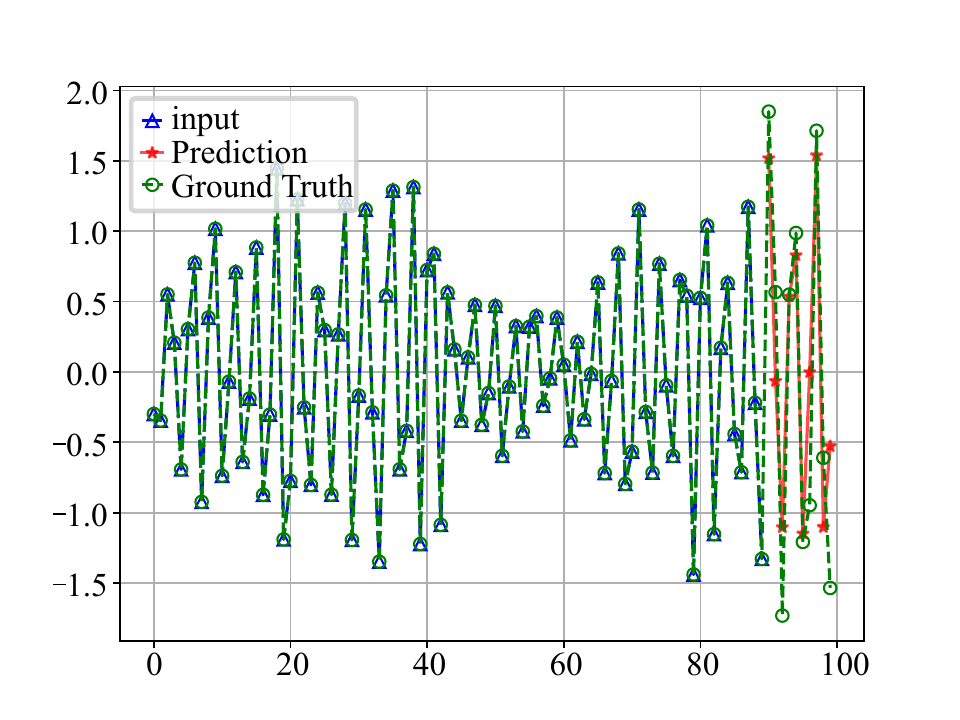}
    }
  \subfloat[LinFormer]{
      \label{fig:H_0_LinFormer}
      \includegraphics[width=0.22\textwidth]{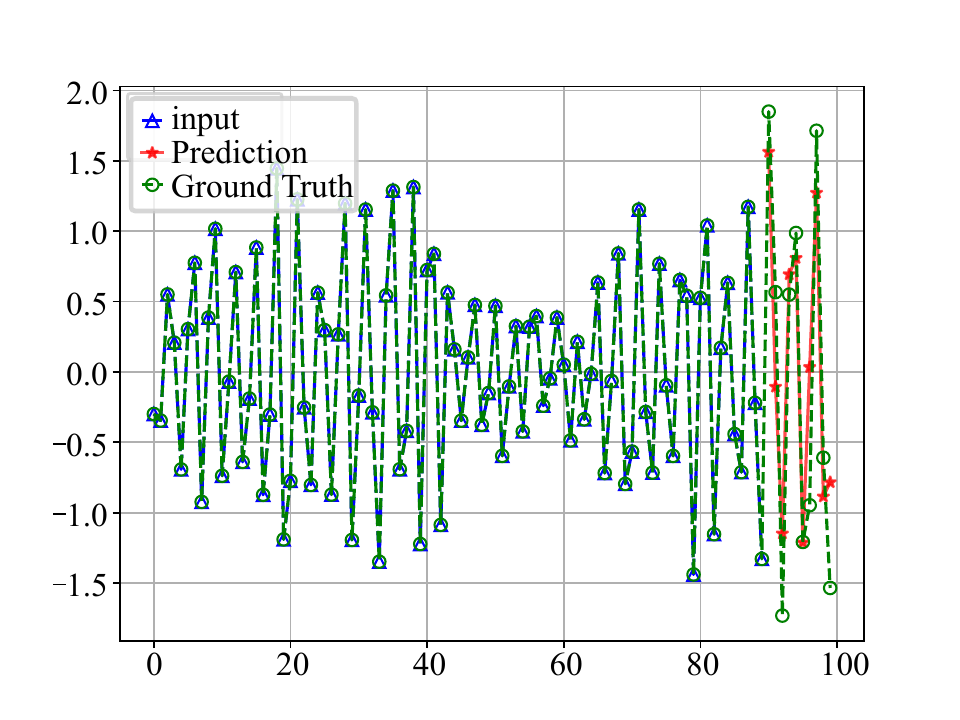}
    } \\
  \subfloat[LSTM]{
      \label{fig:H_40_LSTM}
      \includegraphics[width=0.22\textwidth]{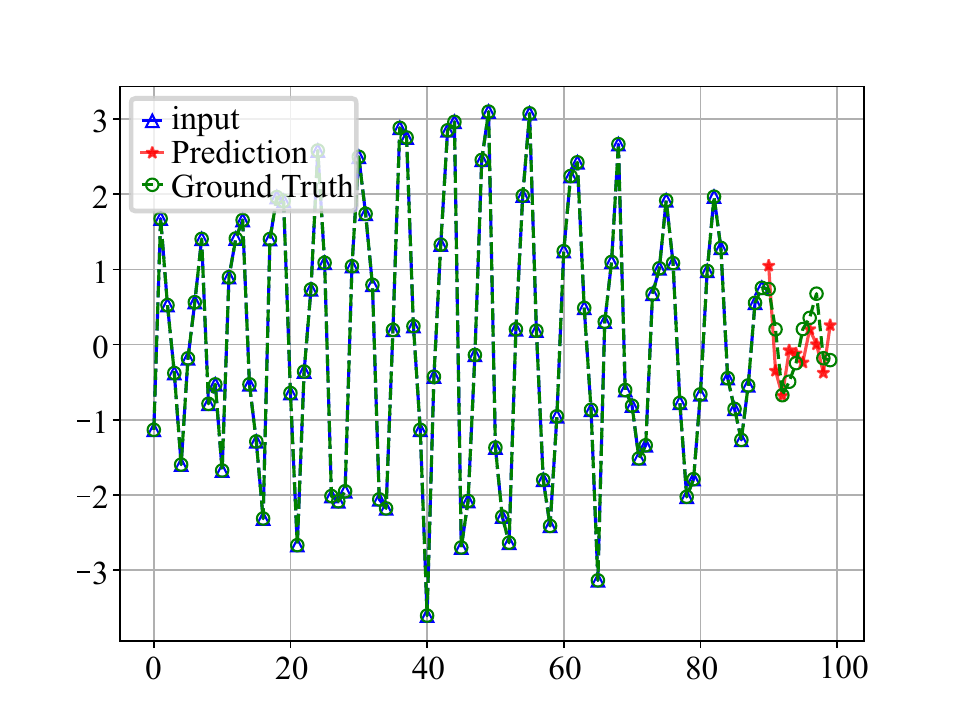}
    }
  \subfloat[GRU]{
      \label{fig:H_40_GRU}
      \includegraphics[width=0.22\textwidth]{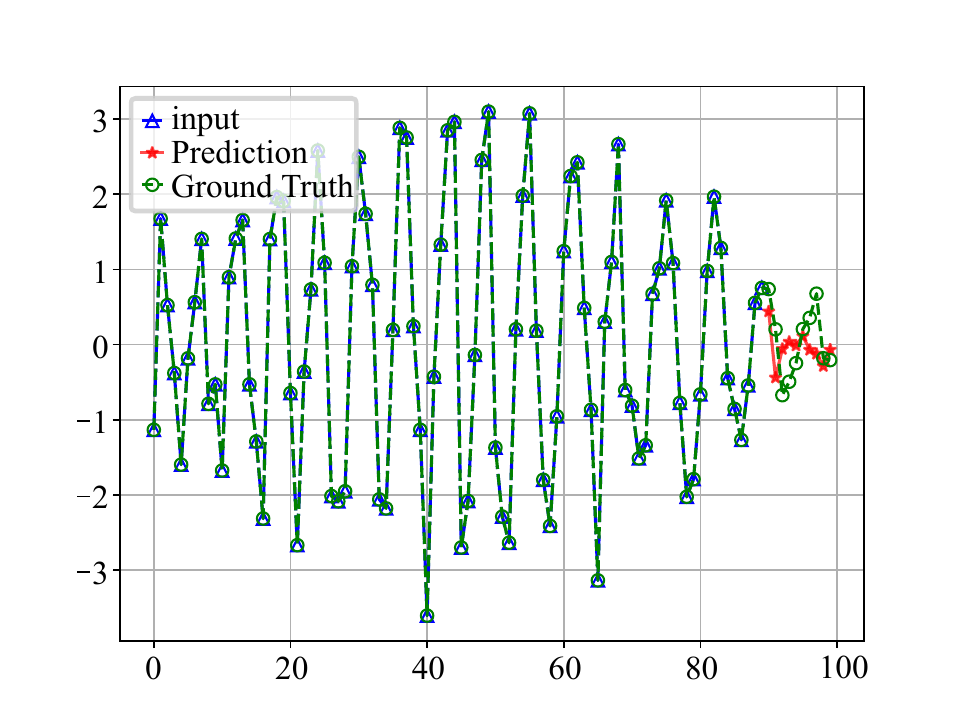}
    }
  \subfloat[Transformer]{
      \label{fig:H_40_Transformer}
      \includegraphics[width=0.22\textwidth]{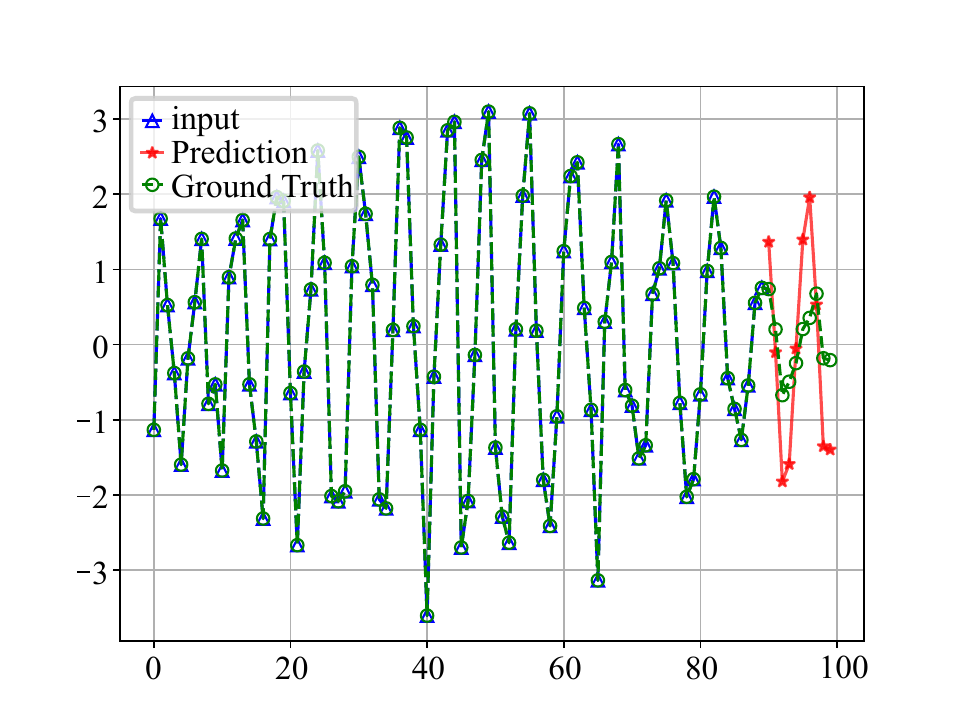}
    }
  \subfloat[LinFormer]{
      \label{fig:H_40_LinFormer}
      \includegraphics[width=0.22\textwidth]{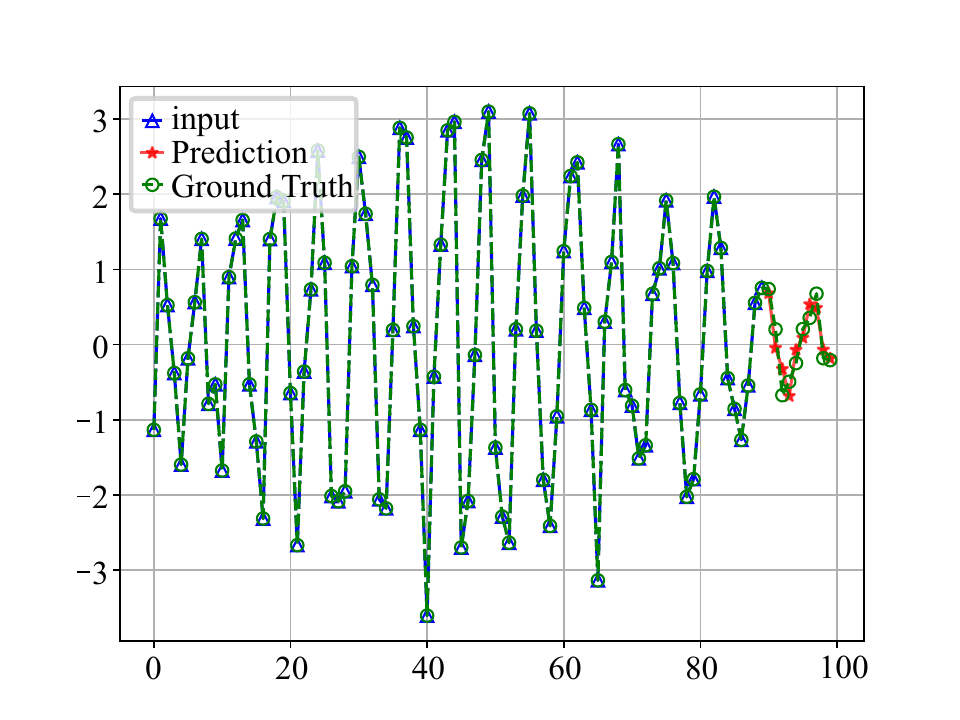}
    }
  \caption{The real part of one channel in the real-world test sample 1 (first row). The real part of one channel in the real-world test sample 41 (second row).}
  \label{fig:real_world_test}
\end{figure*}

\subsection{Generalization Analysis Using Measured CSI}
Extensive simulations using the simulated CSI data have demonstrated the superiority of the proposed LinFormer, to further evaluate the generality of the proposed LinFormer, we propose to fine-tune \cite{nie2022time} all models using measured CSI.

\subsubsection{CSI Measurement}
The CSI measurement is conducted in V2X communication scenarios \cite{peng2022novel}, where a vehicle is driving towards a road side unit as shown in Fig. \ref{fig:SHU}. The detailed measurement configurations are listed in Table \ref{tab:Configurations}. 

\setcounter{figure}{17}
\begin{figure}[htb]
  \centering
  \includegraphics[width=0.48\textwidth]{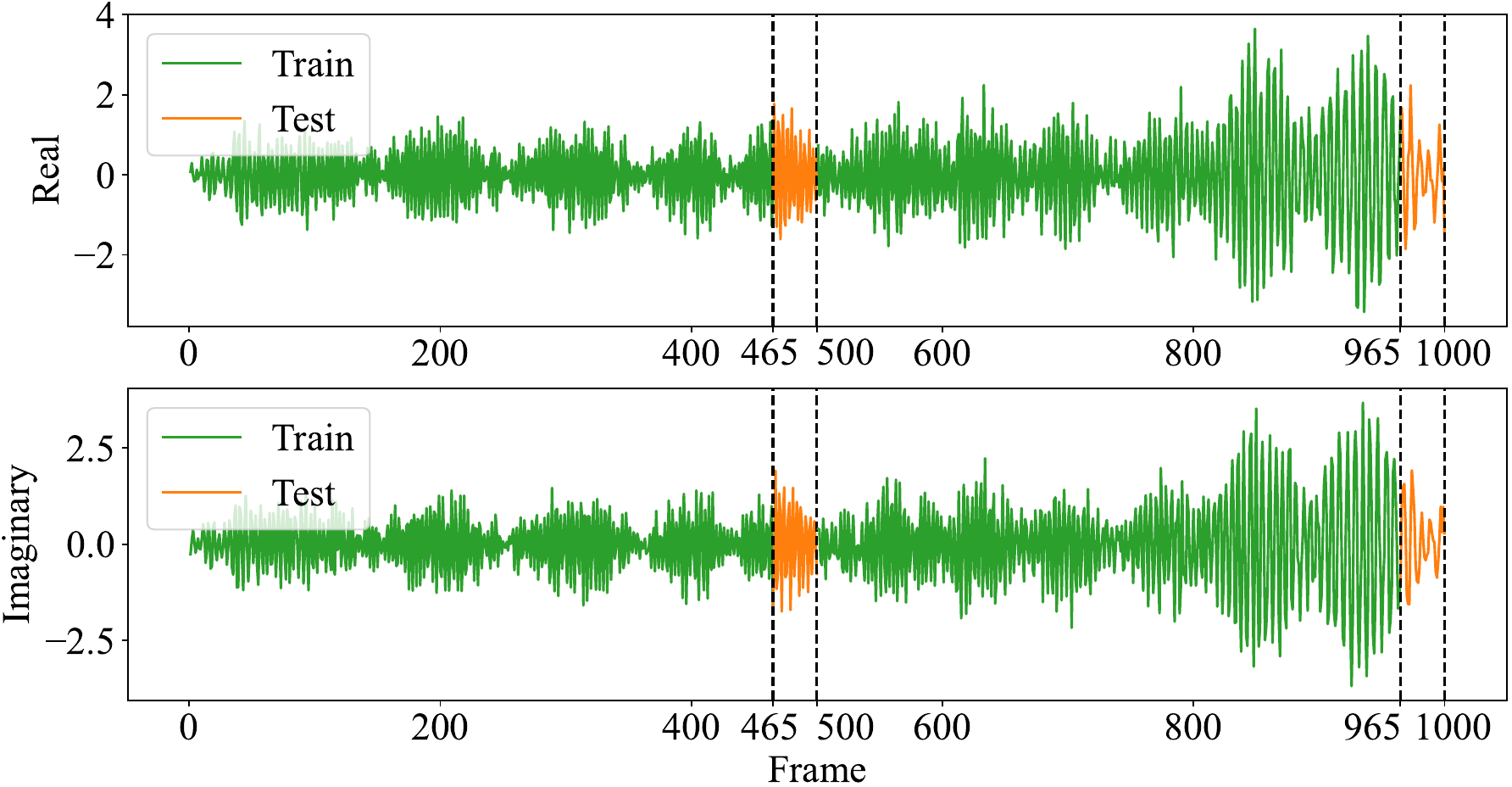}
  \caption{The real and imaginary parts of a channel.}
  \label{fig:real_data}
\end{figure}

As shown in Fig. \ref{fig:real_data}, to ensure diversity, we select the middle and end segments to get test data, ensuring that the test data is not included in the training set. Consistent with the simulation data approach described in the previous section, we apply the sliding window method to generate each training sample. The input historical channel sequence has a length of 90, while the channel sequence to be predicted has a length of 10. There are 750 training samples for fine-tuning the model and 50 samples for testing.

\begin{figure}[htb]
  \centering
  \includegraphics[width=0.4\textwidth]{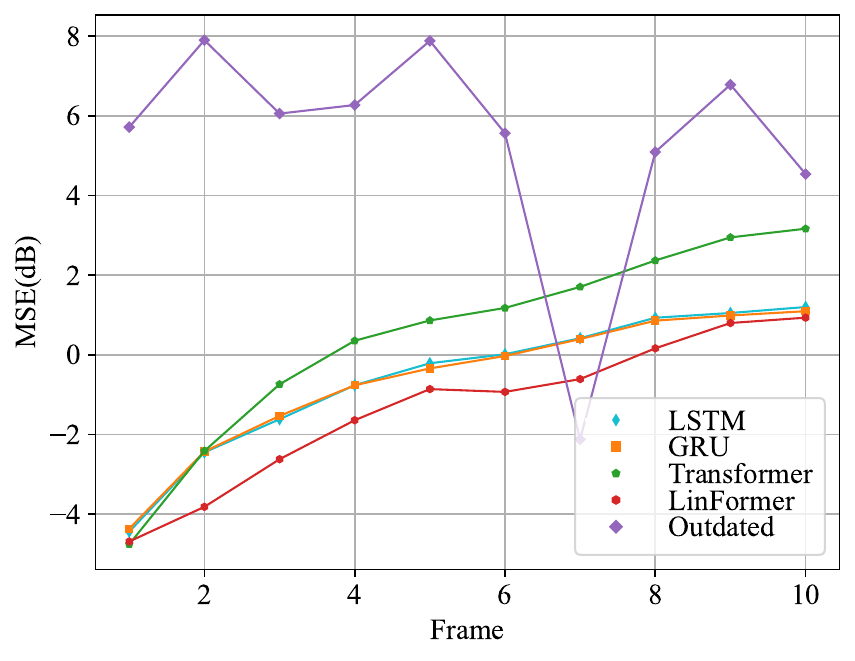}
  \caption{MSE performance of different models tested on real-world data.}
  \label{fig:real_data_MSE}
\end{figure}

\subsubsection{Performance Analysis}
We utilized 750 measured CSI training samples to fine-tune our pre-trained model, which was initially trained on a substantial amount of simulated data. 

The MSE for all test samples is shown in Figure \ref{fig:real_data_MSE}. The proposed LinFormer achieves the best performance, followed by the LSTM and GRU, while the Transformer performs the worst. This indicates that, even when trained on a large amount of simulation data, the Transformer struggles to achieve good results when fine-tuned with limited data. In contrast, the proposed LinFormer effectively overcomes this limitation, demonstrating that LinFormer is easier to train and exhibits superior generalization performance.

The channel prediction performance of different models on two test samples are illustrated in Fig. \ref{fig:real_world_test}. 
In the first row of the figure, we observe that all four models perform relatively well. While, LinFormer demonstrates superior performance on test samples from the end of the vehicle's trajectory, as shown in the second row. We attribute the superior performance of LinFormer to its generalization ability, requiring only a small number of samples to fine-tune and achieve impressive results.


\section{Conclusions}
In this paper, we propose the LinFormer model, a novel approach tailored for wireless channel prediction tasks to overcome limitations observed in Transformer models. Specifically, the LinFormer model employs an encoder-only architecture and proposes a novel TMLP module to substitute the attention mechanism in Transformers, making it more applicable for channel prediction with reduced computational complexity. We adopt a WMSE loss function and data augmentation techniques to improve the performance of the model in low SNR regions. Extensive simulations reveal that the LinFormer model demonstrates superior efficiency in leveraging the scaling law compared to standard Transformer architectures. Furthermore, when evaluated using both simulated and measured CSI data, our proposed LinFormer exhibits marked improvements in channel prediction accuracy and inference speed over existing frameworks. These results underscore the potential of the LinFormer model to significantly advance the field of wireless channel prediction, offering a more efficient and accurate alternative to current methodologies.


\ifCLASSOPTIONcaptionsoff
  \newpage
\fi

\bibliographystyle{IEEEtran}
\bibliography{bibfile}

\begin{thebibliography}{10}
\providecommand{\url}[1]{#1}
\csname url@samestyle\endcsname
\providecommand{\newblock}{\relax}
\providecommand{\bibinfo}[2]{#2}
\providecommand{\BIBentrySTDinterwordspacing}{\spaceskip=0pt\relax}
\providecommand{\BIBentryALTinterwordstretchfactor}{4}
\providecommand{\BIBentryALTinterwordspacing}{\spaceskip=\fontdimen2\font plus
\BIBentryALTinterwordstretchfactor\fontdimen3\font minus \fontdimen4\font\relax}
\providecommand{\BIBforeignlanguage}[2]{{%
\expandafter\ifx\csname l@#1\endcsname\relax
\typeout{** WARNING: IEEEtran.bst: No hyphenation pattern has been}%
\typeout{** loaded for the language `#1'. Using the pattern for}%
\typeout{** the default language instead.}%
\else
\language=\csname l@#1\endcsname
\fi
#2}}
\providecommand{\BIBdecl}{\relax}
\BIBdecl

\bibitem{MMIMORobert2023}
R.~M. Dreifuerst and R.~W. Heath, ``Massive {MIMO} in {5G}: How beamforming, codebooks, and feedback enable larger arrays,'' \emph{IEEE Communications Magazine}, vol.~61, no.~12, pp. 18--23, 2023.

\bibitem{wang2023road}
C.-X. Wang, X.~You, X.~Gao, X.~Zhu, Z.~Li, C.~Zhang, H.~Wang, Y.~Huang, Y.~Chen, H.~Haas \emph{et~al.}, ``On the road to {6G}: {Visions}, requirements, key technologies, and testbeds,'' \emph{IEEE Communications Surveys \& Tutorials}, vol.~25, no.~2, pp. 905--974, 2023.

\bibitem{you2021towards}
X.~You, C.-X. Wang, J.~Huang, X.~Gao, Z.~Zhang, M.~Wang, Y.~Huang, C.~Zhang, Y.~Jiang, J.~Wang \emph{et~al.}, ``Towards 6g wireless communication networks: {Vision}, enabling technologies, and new paradigm shifts,'' \emph{Science China Information Sciences}, vol.~64, pp. 1--74, 2021.

\bibitem{truong2013effects}
K.~T. Truong and R.~W. Heath, ``Effects of channel aging in massive {MIMO} systems,'' \emph{Journal of Communications and Networks}, vol.~15, no.~4, pp. 338--351, 2013.

\bibitem{yin2020addressing}
H.~Yin, H.~Wang, Y.~Liu, and D.~Gesbert, ``Addressing the curse of mobility in massive {MIMO} with prony-based angular-delay domain channel predictions,'' \emph{IEEE Journal on Selected Areas in Communications}, vol.~38, no.~12, pp. 2903--2917, 2020.

\bibitem{EricssonMIMO}
D.~Astely, P.~Von~Butovitsch, S.~Faxér, and E.~Larsson, ``Meeting 5g network requirements with massive {MIMO},'' \emph{Ericsson Technology Review}, vol. 2022, no.~1, pp. 2--11, 2022.

\bibitem{c2}
H.~P. Bui, Y.~Ogawa, T.~Nishimura, and T.~Ohgane, ``Performance evaluation of a multi-user {MIMO} system with prediction of time-varying indoor channels,'' \emph{IEEE transactions on antennas and propagation}, vol.~61, no.~1, pp. 371--379, 2012.

\bibitem{c3}
I.~C. Wong and B.~L. Evans, ``Joint channel estimation and prediction for {OFDM} systems,'' in \emph{GLOBECOM'05. IEEE Global Telecommunications Conference, 2005.}, vol.~4.\hskip 1em plus 0.5em minus 0.4em\relax IEEE, 2005, pp. 5--pp.

\bibitem{kim2020massive}
H.~Kim, S.~Kim, H.~Lee, C.~Jang, Y.~Choi, and J.~Choi, ``Massive {MIMO} channel prediction: Kalman filtering vs. machine learning,'' \emph{IEEE Transactions on Communications}, vol.~69, no.~1, pp. 518--528, 2020.

\bibitem{c5}
J.~Yuan, H.~Q. Ngo, and M.~Matthaiou, ``Machine learning-based channel prediction in massive {MIMO} with channel aging,'' \emph{IEEE Transactions on Wireless Communications}, vol.~19, no.~5, pp. 2960--2973, 2020.

\bibitem{huang2022artificial}
C.~Huang, R.~He, B.~Ai, A.~F. Molisch, B.~K. Lau, K.~Haneda, B.~Liu, C.-X. Wang, M.~Yang, C.~Oestges \emph{et~al.}, ``Artificial intelligence enabled radio propagation for communications—{Part II}: {Scenario} identification and channel modeling,'' \emph{IEEE Transactions on Antennas and Propagation}, vol.~70, no.~6, pp. 3955--3969, 2022.

\bibitem{xiao2022c}
Z.~Xiao, Z.~Zhang, C.~Huang, X.~Chen, C.~Zhong, and M.~Debbah, ``{C-GRBFnet}: {A} physics-inspired generative deep neural network for channel representation and prediction,'' \emph{IEEE Journal on Selected Areas in Communications}, vol.~40, no.~8, pp. 2282--2299, 2022.

\bibitem{wu2021channel}
C.~Wu, X.~Yi, Y.~Zhu, W.~Wang, L.~You, and X.~Gao, ``Channel prediction in high-mobility massive {MIMO}: {From} spatio-temporal autoregression to deep learning,'' \emph{IEEE Journal on Selected Areas in Communications}, vol.~39, no.~7, pp. 1915--1930, 2021.

\bibitem{elman1990finding}
J.~L. Elman, ``Finding structure in time,'' \emph{Cognitive science}, vol.~14, no.~2, pp. 179--211, 1990.

\bibitem{c6}
C.~Liu, X.~Liu, Z.~Wei, S.~Hu, D.~W.~K. Ng, and J.~Yuan, ``Deep learning-empowered predictive beamforming for {IRS}-assisted multi-user communications,'' in \emph{2021 IEEE Global Communications Conference (GLOBECOM)}.\hskip 1em plus 0.5em minus 0.4em\relax IEEE, 2021, pp. 01--07.

\bibitem{GRU}
J.~Chung, C.~Gulcehre, K.~Cho, and Y.~Bengio, ``Empirical evaluation of gated recurrent neural networks on sequence modeling,'' \emph{arXiv preprint arXiv:1412.3555}, 2014.

\bibitem{stenhammar2024comparison}
O.~Stenhammar, G.~Fodor, and C.~Fischione, ``A comparison of neural networks for wireless channel prediction,'' \emph{IEEE Wireless Communications}, 2024.

\bibitem{hochreiter1997long}
S.~Hochreiter and J.~Schmidhuber, ``Long short-term memory,'' \emph{Neural computation}, vol.~9, no.~8, pp. 1735--1780, 1997.

\bibitem{c9}
W.~Xu, J.~An, Y.~Xu, C.~Huang, L.~Gan, and C.~Yuen, ``Time-varying channel prediction for {RIS}-assisted {MU-MISO} networks via deep learning,'' \emph{IEEE Transactions on Cognitive Communications and Networking}, vol.~8, no.~4, pp. 1802--1815, 2022.

\bibitem{Jiang2}
W.~Jiang and H.~D. Schotten, ``Deep learning for fading channel prediction,'' \emph{IEEE Open Journal of the Communications Society}, vol.~1, pp. 320--332, 2020.

\bibitem{Transformer}
A.~Vaswani, N.~Shazeer, N.~Parmar, J.~Uszkoreit, L.~Jones, A.~N. Gomez, {\L}.~Kaiser, and I.~Polosukhin, ``Attention is all you need,'' \emph{Advances in neural information processing systems}, vol.~30, 2017.

\bibitem{Accurate}
H.~Jiang, M.~Cui, D.~W.~K. Ng, and L.~Dai, ``Accurate channel prediction based on transformer: {Making} mobility negligible,'' \emph{IEEE Journal on Selected Areas in Communications}, vol.~40, no.~9, pp. 2717--2732, 2022.

\bibitem{eldele2024tslanet}
E.~Eldele, M.~Ragab, Z.~Chen, M.~Wu, and X.~Li, ``Tslanet: Rethinking transformers for time series representation learning,'' \emph{arXiv preprint arXiv:2404.08472}, 2024.

\bibitem{DLinear}
A.~Zeng, M.~Chen, L.~Zhang, and Q.~Xu, ``Are transformers effective for time series forecasting?'' in \emph{Proceedings of the AAAI conference on artificial intelligence}, vol.~37, no.~9, 2023, pp. 11\,121--11\,128.

\bibitem{RLinear}
Z.~Li, S.~Qi, Y.~Li, and Z.~Xu, ``Revisiting long-term time series forecasting: {An} investigation on linear mapping,'' \emph{arXiv preprint arXiv:2305.10721}, 2023.

\bibitem{SVmodel}
O.~El~Ayach, S.~Rajagopal, S.~Abu-Surra, Z.~Pi, and R.~W. Heath, ``Spatially sparse precoding in millimeter wave {MIMO} systems,'' \emph{IEEE transactions on wireless communications}, vol.~13, no.~3, pp. 1499--1513, 2014.

\bibitem{3gpp.38.331}
3GPP, ``{Radio Resource Control (RCC) Protocol Specification, document TS 38.331},'' 3GPP, Tech. Rep. 38.331, Jun. 2019, version 15.6.0.

\bibitem{lim2021deep}
S.~H. Lim, S.~Kim, B.~Shim, and J.~W. Choi, ``Deep learning-based beam tracking for millimeter-wave communications under mobility,'' \emph{IEEE Transactions on Communications}, vol.~69, no.~11, pp. 7458--7469, 2021.

\bibitem{liu2020robust}
C.~Liu, M.~Li, L.~Zhao, P.~Whiting, S.~V. Hanly, I.~B. Collings, and M.~Zhao, ``Robust adaptive beam tracking for mobile millimetre wave communications,'' \emph{IEEE Transactions on Wireless Communications}, vol.~20, no.~3, pp. 1918--1934, 2020.

\bibitem{wang20206g}
C.-X. Wang, J.~Huang, H.~Wang, X.~Gao, X.~You, and Y.~Hao, ``6g wireless channel measurements and models: {Trends} and challenges,'' \emph{IEEE Vehicular Technology Magazine}, vol.~15, no.~4, pp. 22--32, 2020.

\bibitem{wang2022pervasive}
C.-X. Wang, Z.~Lv, X.~Gao, X.~You, Y.~Hao, and H.~Haas, ``Pervasive wireless channel modeling theory and applications to {6G} {GBSMs} for all frequency bands and all scenarios,'' \emph{IEEE Transactions on Vehicular Technology}, vol.~71, no.~9, pp. 9159--9173, 2022.

\bibitem{stuber2001principles}
G.~L. St{\"u}ber and G.~L. Steuber, \emph{Principles of mobile communication}.\hskip 1em plus 0.5em minus 0.4em\relax Springer, 2001, vol.~2.

\bibitem{mumtaz2016mmwave}
S.~Mumtaz, J.~Rodriguez, and L.~Dai, \emph{{MmWave} massive {MIMO}: {A} paradigm for {5G}}.\hskip 1em plus 0.5em minus 0.4em\relax Academic Press, 2016.

\bibitem{bert}
J.~Devlin, M.-W. Chang, K.~Lee, and K.~Toutanova, ``Bert: {Pre-training} of deep bidirectional transformers for language understanding,'' \emph{arXiv preprint arXiv:1810.04805}, 2018.

\bibitem{cao2023unnatural}
Q.~Cao, T.~Kojima, Y.~Matsuo, and Y.~Iwasawa, ``Unnatural error correction: Gpt-4 can almost perfectly handle unnatural scrambled text,'' in \emph{Proceedings of the 2023 Conference on Empirical Methods in Natural Language Processing}, 2023, pp. 8898--8913.

\bibitem{xiong2020layer}
R.~Xiong, Y.~Yang, D.~He, K.~Zheng, S.~Zheng, C.~Xing, H.~Zhang, Y.~Lan, L.~Wang, and T.~Liu, ``On layer normalization in the transformer architecture,'' in \emph{International Conference on Machine Learning}.\hskip 1em plus 0.5em minus 0.4em\relax PMLR, 2020, pp. 10\,524--10\,533.

\bibitem{cover1999elements}
T.~M. Cover, \emph{Elements of information theory}.\hskip 1em plus 0.5em minus 0.4em\relax John Wiley \& Sons, 1999.

\bibitem{abramowitz1968handbook}
M.~Abramowitz and I.~A. Stegun, \emph{Handbook of mathematical functions with formulas, graphs, and mathematical tables}.\hskip 1em plus 0.5em minus 0.4em\relax US Government printing office, 1968, vol.~55.

\bibitem{3gpp.38.901}
3GPP, ``Study on channel model for frequencies from 0.5 to 100 ghz,'' 3GPP, Tech. Rep. 38.901, Jan. 2020, version 16.1.0.

\bibitem{matlab5gtoolbox}
{MATLAB and MathWorks}, ``5g toolbox,'' \url{https://www.mathworks.com/products/5g.html}, 2020, accessed: Sep. 15, 2020.

\bibitem{AdamW}
I.~Loshchilov and F.~Hutter, ``Decoupled weight decay regularization,'' \emph{arXiv preprint arXiv:1711.05101}, 2017.

\bibitem{OneCycleLR}
L.~N. Smith and N.~Topin, ``Super-convergence: {Very} fast training of neural networks using large learning rates,'' in \emph{Artificial intelligence and machine learning for multi-domain operations applications}, vol. 11006.\hskip 1em plus 0.5em minus 0.4em\relax SPIE, 2019, pp. 369--386.

\bibitem{peng2022novel}
F.~Peng, S.~Zhang, Z.~Jiang, X.~Wang, and W.~Chen, ``A novel mobility induced channel prediction mechanism for vehicular communications,'' \emph{IEEE Transactions on Wireless Communications}, vol.~22, no.~5, pp. 3488--3502, 2022.

\bibitem{nie2022time}
Y.~Nie, N.~H. Nguyen, P.~Sinthong, and J.~Kalagnanam, ``A time series is worth 64 words: {Long}-term forecasting with transformers,'' \emph{arXiv preprint arXiv:2211.14730}, 2022.

\end{thebibliography}

\end{document}